 \definecolor{mydarkblue}{rgb}{0,0.08,0.45}
\theoremstyle{plain}
\theoremstyle{definition}
\theoremstyle{remark}
\renewcommand{\Pr}{\mathbb{P}}
\newcommand{\defEq}{\stackrel{.}{=}}
\global\long\def\calF{\mathscr{F}}%
\global\long\def\calX{\mathscr{X}}%
\global\long\def\calY{\mathscr{Y}}%
\global\long\def\bbI{\mathbb{I}}%
\global\long\def\bbP{\mathbb{P}}%
\global\long\def\bbR{\mathbb{R}}%
\global\long\def\lossce{\ell_{\mathrm{CE}}}%
\newcommand{\Real}{\mathbb{R}}
\newcommand{\xPos}{x^+}
\newcommand{\xNeg}{x^-}
\newcommand{\tick}{$\checkmark$}
\newcommand{\cross}{$\times$}
\newcommand{\fade}[1]{{\color{gray!90}#1}}
\providecommand{\propositionname}{Proposition}
\providecommand{\theoremname}{Theorem}
\theoremstyle{plain}
\newtheorem{thm}{\theoremname}
\theoremstyle{plain}
\newtheorem{prop}[thm]{\propositionname}
\newenvironment{proof}[1][\proofname]{\par
	\normalfont\topsep6\p@\@plus6\p@\relax
	\trivlist
	\itemindent\parindent
	\item[\hskip\labelsep\scshape #1]\ignorespaces
}{%
	\endtrivlist\@endpefalse
}
\providecommand{\proofname}{Proof}
\newtheorem{lem}[thm]{\protect\lemmaname}
\providecommand{\lemmaname}{Lemma}
\newcommand{\ourtitle}{ELM: Embedding and Logit Margins for Long-Tail Learning}
\author{
Wittawat Jitkrittum, \:
Aditya Krishna Menon,\:
Ankit Singh Rawat,\:
Sanjiv Kumar\\[1mm]
\texttt{\{wittawat, adityakmenon, ankitsrawat, sanjivk\}@google.com} \\[4mm]
Google, New York, USA
}
\title{\ourtitle}
\begin{document}

\maketitle
\begin{abstract}

Long-tail learning is the problem of learning under skewed label distributions, which pose a challenge for standard learners.  Several recent approaches for the problem have proposed enforcing a suitable margin in logit space.  Such techniques are intuitive analogues of the guiding principle behind SVMs, and are equally applicable to linear models and neural models.  However, when applied to neural models, such techniques do not explicitly control the geometry of the learned embeddings.  This can be potentially sub-optimal, since embeddings for tail classes may be diffuse, resulting in poor generalization for these classes.  We present Embedding and Logit Margins (ELM), a unified approach to enforce margins in logit space, and regularize the distribution of embeddings.  This connects losses for long-tail learning to proposals in the literature on metric embedding, and contrastive learning.  We theoretically show that minimising the proposed ELM objective helps reduce the generalisation gap. The ELM method is shown to perform well empirically, and results in tighter tail class embeddings.

\end{abstract}

\section{Introduction}
Practical classification problems often possess skewed label distributions,
which pose a challenge for standard learners.
This problem of learning under \emph{class imbalance}~\citep{Kubat:1997a,Chawla:2002,HeGa09},
or \emph{long-tail learning},
has received renewed interest in the context of neural models~\citep{VanHorn:2017,Buda:2017,Liu:2019}.
Successful approaches to the problem
include modifying the training data (e.g., by up- or down-sampling different labels~\citep{KubatMa97,Chawla:2002,Wallace:2011,Mikolov:2013,Mahajan:2018,Yin:2018,Zhang:2019}),
modifying the classification rule (e.g., by applying varying thresholds for the different classes~\citep{Fawcett:1996,Provost:2000,Maloof03,King:2001,Collell:2016}),
and
modifying the loss function (e.g., by penalising errors on rare labels more strongly~\citep{Zhang:2017,Cui:2019,Cao:2019,Tan:2020,Jamal:2020,Ren:2020,Wu:2020,Menon:2021,SamChe2021,Kini:2021,Wang:2021b}).

Our interest in this paper is in the latter class of loss modification methods.
These have garnered particular interest of late,
with
several recent works~\citep{Cao:2019,Tan:2020,Ren:2020,Menon:2021,Kini:2021,Wang:2021b}
establishing the value of enforcing \emph{asymmetric logit margins}.
Such techniques are intuitive
analogues of the guiding principle behind SVMs,
and aim to clearly separate the scores for rare versus dominant classes.
Despite their success, such techniques are not without limitation.
For example,
when applied to neural
models, they do not explicitly control the distribution
of the learned embeddings themselves.
This can be potentially sub-optimal,
since embeddings for tail classes may be diffuse,
as has been empirically observed~\citep{Zhang:2017,Yin:2018,Liu:2019,Zhong:2019,Ye:2020,SamChe2021,Wang:2021}.

In this paper,
we present ELM,
a framework that enforces both
\emph{Embedding
and Logit Margins}.
In a nutshell, ELM enforces margins
in logit space, and regularize the distribution of embeddings.
This
connects losses for long-tail learning to proposals in the literature
on metric learning~\citep{Weinberger:2009}, and contrastive learning~\citep{Khosla:2020}.
Theoretically, we show
how ELM encourages a better approximation to the Bayes solution, by
ensuring that class-conditionals are more Gaussian.
Empirically, ELM is
shown to perform well, and results in tighter embeddings
(cf.~ Figure~\ref{fig:toy2d_embed}).
In sum, our contributions are:

\begin{enumerate}[label=(\roman*),itemsep=0pt,topsep=0pt,leftmargin=16pt]
    \item{ we propose ELM~\eqref{eqn:elm}, a technique that enforces both embedding and logit margins for long-tail learning,
    leveraging insights from
    metric~\citep{Weinberger:2009} and representation learning~\citep{Wen:2016};}
    \item{we establish the benefits of enforcing embedding and logit margins, by showing that ELM encourages a better approximation to the Bayes-optimal classifier (\S\ref{sec:analysis}); and,}
    \item{ we present experiments on synthetic and real-world datasets that confirm the value of ELM against existing methods (\S\ref{sec:experiments}),
    and in particular demonstrate the import of enforcing margins in both logit and embedding space.}
\end{enumerate}
\begin{figure*}
    \centering
    \resizebox{\linewidth}{!}{
    \subcaptionbox{Training data.}{
    \includegraphics[scale=0.24]{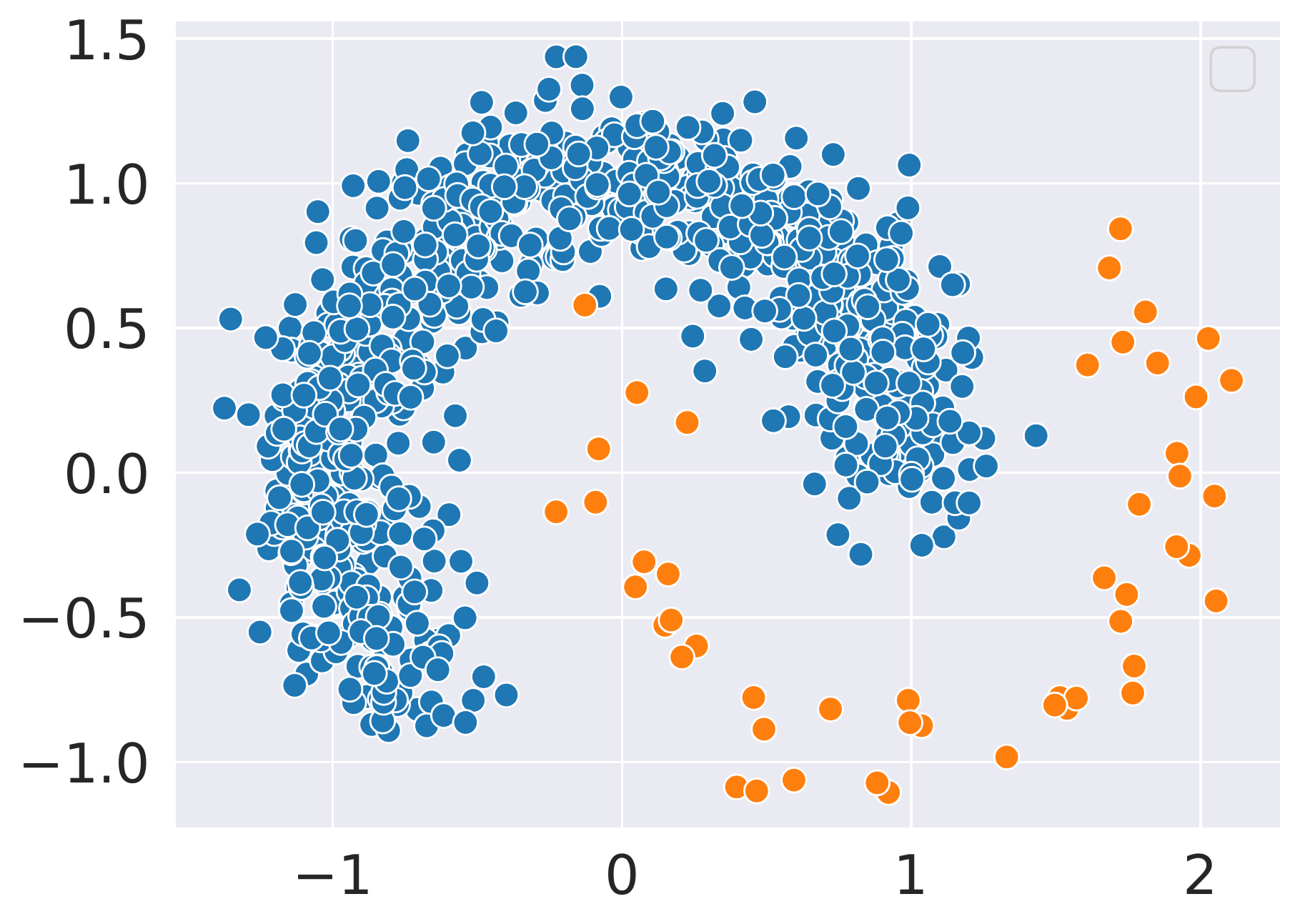}
    }
    \quad
    \subcaptionbox{Cross-entropy with logit margin.}{
    \includegraphics[scale=0.24]{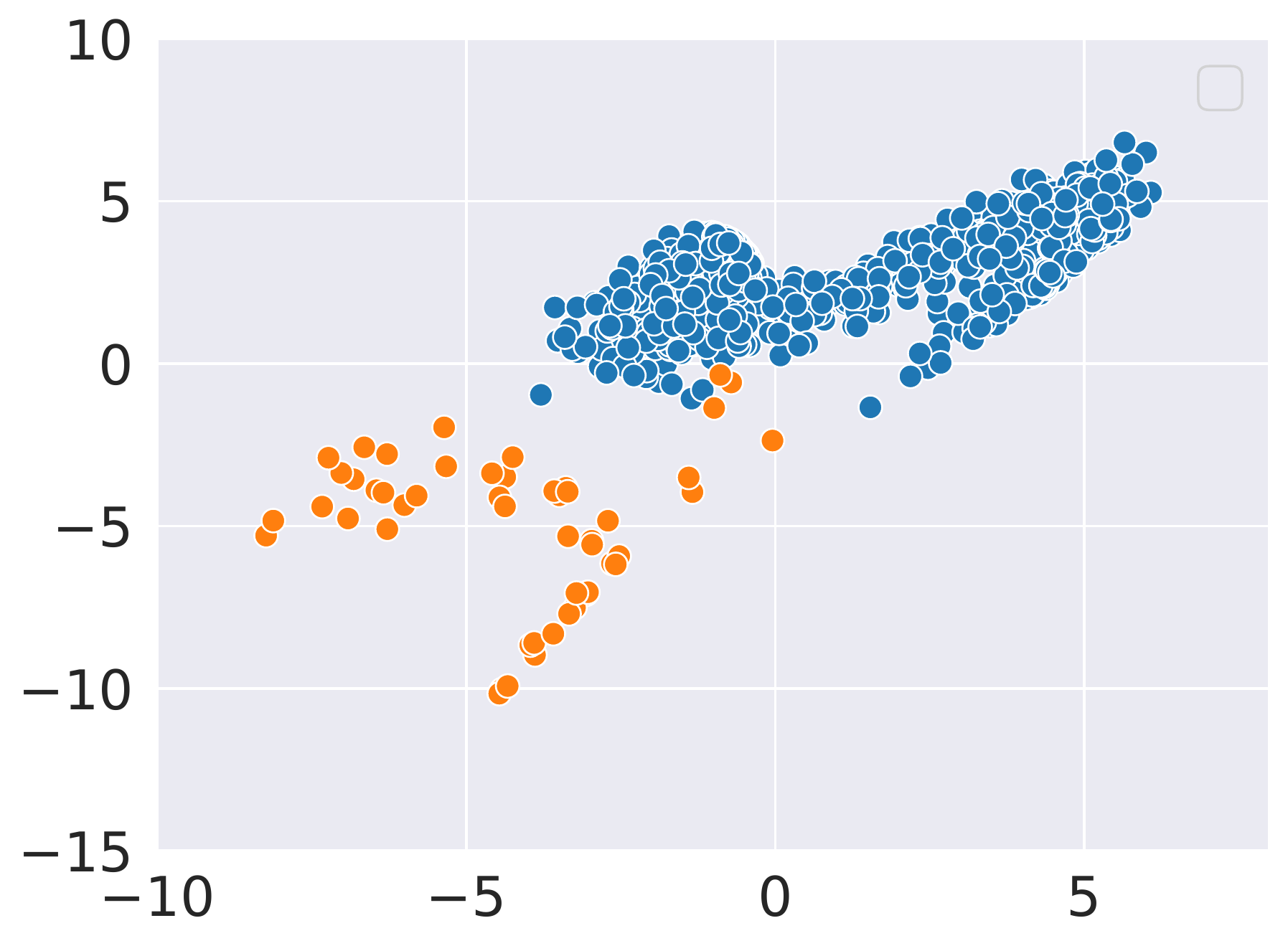}
    }
    \quad
    \subcaptionbox{ELM (proposed).}{
     \includegraphics[scale=0.24]{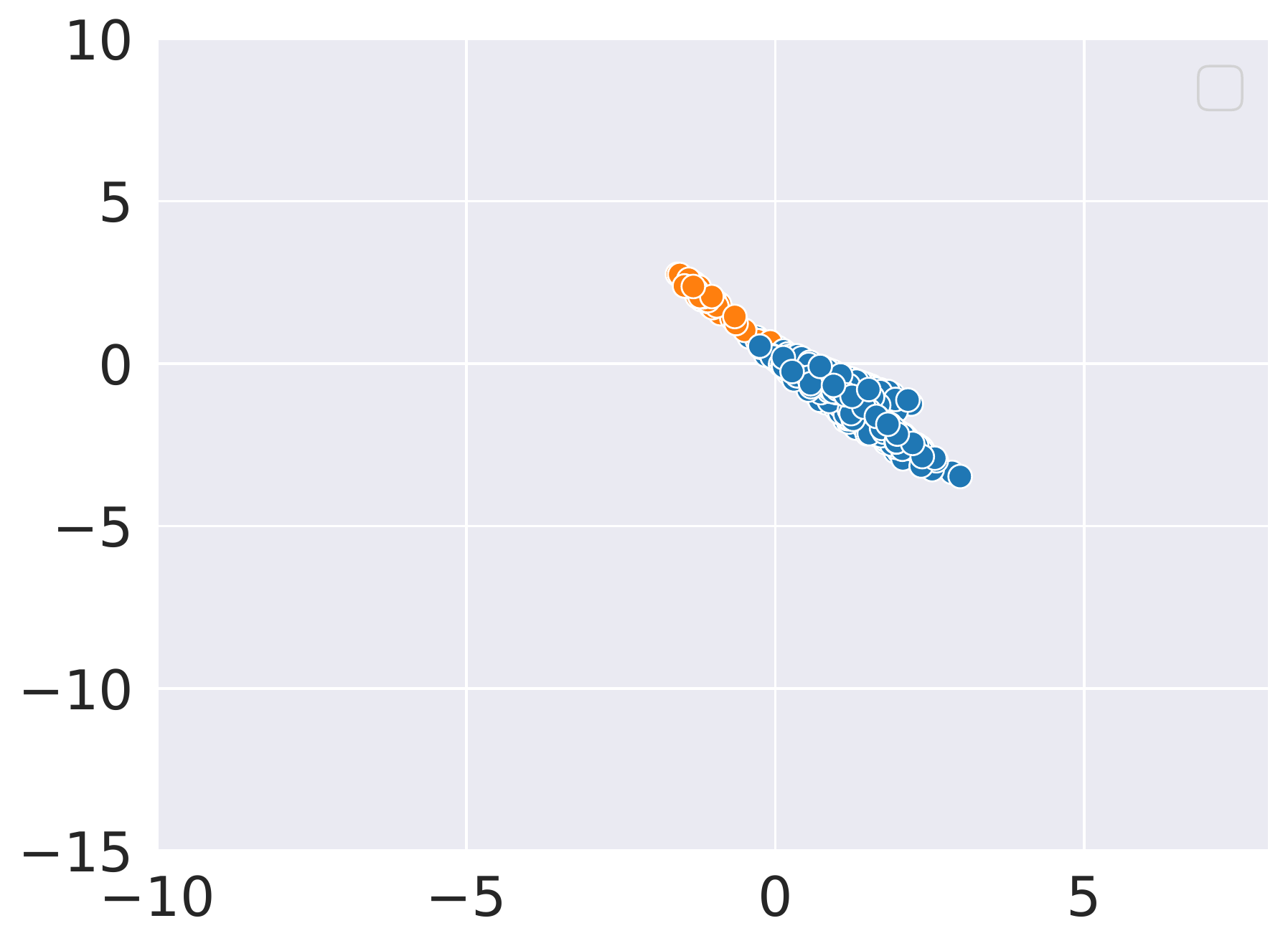}
     \includegraphics[scale=0.24]{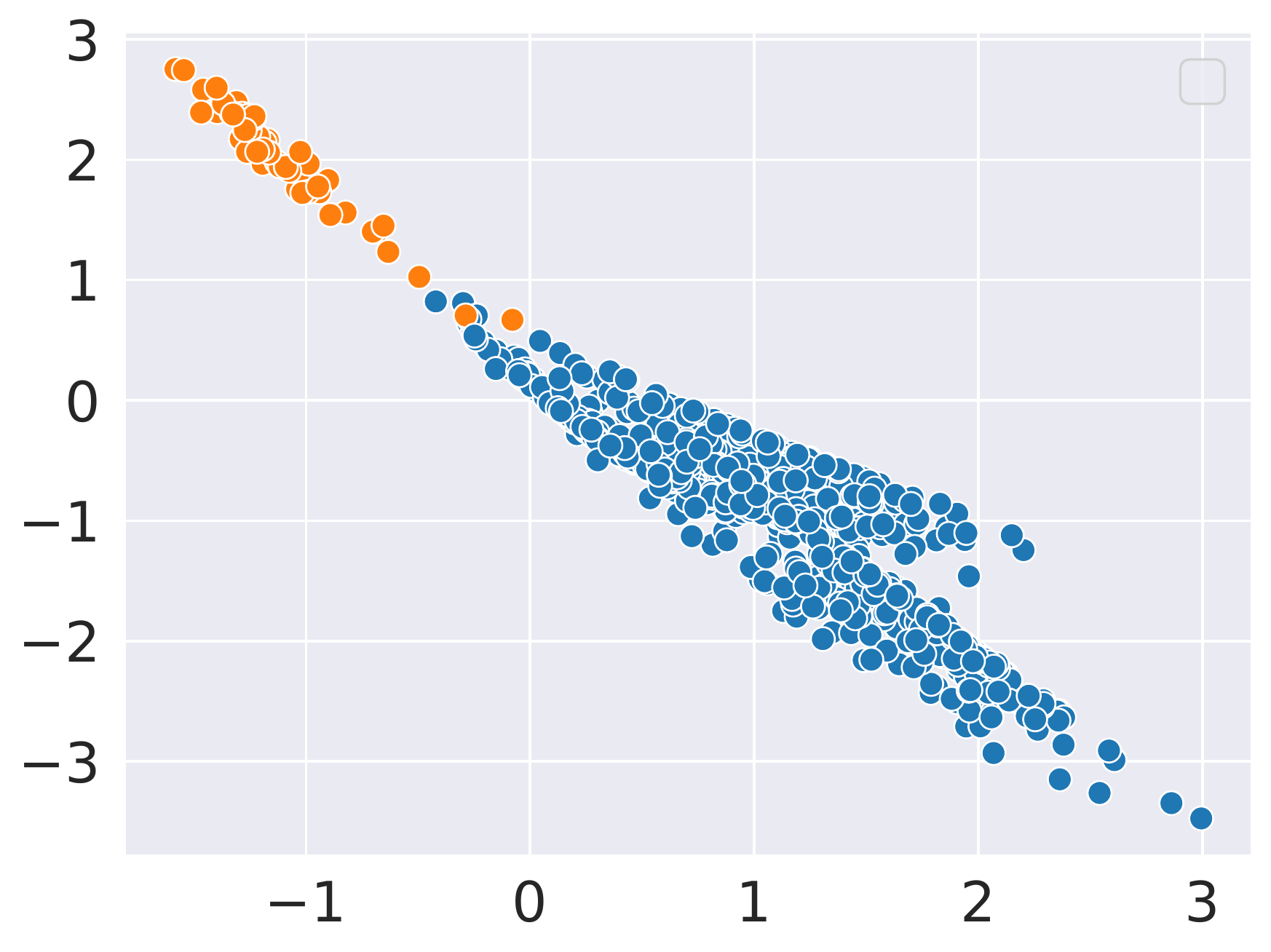}
    }
    }
    \caption{Illustration of embeddings produced from a three-layer ReLU network trained on an imbalanced variant of the two-moon problem.
    The embeddings learned by our proposed ELM regulariser are
    more compact than those from cross-entropy with logit margins.
    In particular, embeddings from the rare class (orange) are pulled
    more tightly together.
    See~\S\ref{sec:limits} for details.
    }
    \label{fig:toy2d_embed}
\end{figure*}

\section{Background and Notation}
\subsection{Multi-Class Classification}

Let $\calX$ be the domain of input instances, and $\calY=[L]\defEq\{1,\ldots,L\}$
be the domain of class labels. Given a training sample $S\defEq\{(x_{i},y_{i})\}_{i=1}^{N}\stackrel{\text{i.i.d.}}{\sim}\bbP^{N}$
where $\bbP$ is a joint distribution defined on $\calX\times\calY$,
the goal of the multi-class classification problem is to learn a
\emph{scorer} $f\colon\calX\to\Real^{L}$ with $f(x)=(f_{1}(x),\ldots,f_{L}(x))^{\top}$
so as to minimize the expected loss $\ell\colon\calY\times\Real^{L}\to\bbR_+$. That is, one
solves the following optimization problem:
\begin{align}
\min_{f\in\calF} & 
\thinspace R( f ) 
\defEq 
\mathbb{E}_{(x,y)\sim\bbP} \thinspace \left[ \ell(y,f(x)) \right],
\label{eq:pop_erm}
\end{align}
where $\calF$ is a class of models for the scorer.
For the zero-one loss
$\ell_{01}(y,f(x))\defEq \bbI[y\in\arg\max_{y'\in\calY} f_{y'}(x)]$,
\eqref{eq:pop_erm}
coincides with the notion of misclassification error.
Since $\ell_{01}$
is not differentiable, a commonly used surrogate loss
is the softmax cross entropy $\lossce(y,f(x)) \defEq -\log\left[\frac{e^{f_{y}}}{\sum_{y'\in\calY}e^{f_{y'}}}\right]=\log\left[1+\sum_{y'\neq y}e^{f_{y'}(x)-f_{y}(x)}\right]$.
Since $f$ is an argument to the softmax function, $f_{1}(x),\ldots,f_{L}(x)$
are also known as the logits for $x$.

\subsection{Long-Tail Learning}

Practical classification problems often posses a skewed label distribution $\Pr( y )$.
This problem of learning under \emph{class imbalance} is a classical area of study~\citep{Kubat:1997a,Chawla:2002,HeGa09},
which
has received renewed interest in the context of neural models
in the area of \emph{long-tail learning}~\citep{VanHorn:2017,Buda:2017,Liu:2019,Johnson:2019}.
The core challenge in such settings is ensuring that rare labels are not systematically misclassified,
owing to their limited representation in the training data.

Formally, this is typically encapsulated as the goal of minimising the \emph{balanced error},
which posits a uniform label distribution $\Pr( y )$ for evaluation:
\begin{align}
\min_{f\in\calF} & \thinspace R_{\rm bal}( f ) \defEq \frac{1}{L} \sum\nolimits_{y \in [L]} \mathbb{E}_{x \mid y}\thinspace \left[ \ell(y,f(x)) \right].\label{eq:pop_bal}
\end{align}
Most
successful approaches 
follow one of three strategies:
\begin{enumerate}[label=(\roman*),itemsep=0pt,topsep=0pt,leftmargin=16pt]
    \item modifying the training data to make it more balanced (e.g., by up- or down-sampling different labels~\citep{KubatMa97,Chawla:2002,Wallace:2011,Mikolov:2013,Xue:2015,Mahajan:2018,Yin:2018,Zhang:2019}),

    \item modifying the classification rule to ensure greater representation of rare classes (e.g., applying per-class thresholds~\citep{Fawcett:1996,Provost:2000,Maloof03,King:2001,Collell:2016,Kang:2020,Zhang:2021}),
and

    \item modifying the loss function to penalise errors on rare labels more strongly (e.g., by introducing appropriate asymmetry~\citep{Zhang:2017,Cui:2019,Cao:2019,Tan:2020,Jamal:2020,Ren:2020,Wu:2020,Menon:2021,Deng:2021,Kini:2021,Wang:2021b}).
\end{enumerate}
The above is not exhaustive, and other strategies have also been pursued~\citep{Yang:2020b,Sahoo:2020,Liu:2019,Liu:2020,Chu:2020,Tang:2020b,SamChe2021,Ye:2021}.
Amongst loss modification techniques, a popular strategy involves augmenting the softmax cross-entropy with \emph{logit margins}.
Specifically, these involve an instantiation of the loss
\begin{equation}
    \label{eqn:xent-margin}
    \ell_{\rm mar}( y, f(x) ) \defEq \log\left[ 1+\sum\nolimits_{y'\neq y} e^{\Delta_{yy'} + f_{y'}(x)-f_{y}(x)} \right],
\end{equation}
where $\Delta_{yy'}$ is some set of margins between labels $y$ and $y'$.
Examples of such $\Delta$
include
$\Delta_{yy'} = \frac{1}{\Pr(y)^{1/4}}$~\citep{Cao:2019},
$\Delta_{yy'} = {\Pr(y')}$~\citep{Tan:2020},
and
$\Delta_{yy'} = \log \frac{\Pr(y')}{\Pr(y)}$~\citep{Ren:2020,Menon:2021,Wang:2021b}.
Intuitively, such a loss can be seen as a soft approximation to
$\max_{y'\neq y} [ \Delta_{yy'} + f_{y'}(x)-f_{y}(x) ]_+$, where $[a]_+
\defEq \max(0, a)$,
and thus encourages a sufficiently large gap between the logits for $y$ and $y'$.
By ensuring that $\Delta_{yy'}$ is large for rare ``positive'' labels $y$ and/or dominant ``negative'' labels $y'$,
one mitigates confusing a rare label for a dominant one.
In the sequel, we shall primarily be interested in the choice $\Delta_{yy'} = \log \frac{\Pr(y')}{\Pr(y)}$, which possesses good empirical performance compared to alternatives~\citep{Ren:2020,Menon:2021,Wang:2021b}.

\subsection{The Limits of Logit Margins}
\label{sec:limits}

While logit margins have enjoyed considerable success,
they alone
may not be enough to guarantee accurate predictions for tail samples.
Consider a synthetic setup, where we have 2D data with binary labels $\{ \mathrm{Head}, \mathrm{Tail} \}$, following a similar setup to the ``two moons'' distribution~\citep{Zhou:2003}.
We set $\Pr( y = \mathrm{Tail} ) = 5\%$, so that the label distribution is imbalanced.
To learn a nonlinear classifier, we use a three layer feedforward network with ReLU activation,
with $\{ 16, 8, 2 \}$ hidden units respectively.

The use of two hidden units for the pre-output layer facilitates ready visualization.
Figure~\ref{fig:toy2d_embed} illustrates the learned embeddings under
minimisation of the cross-entropy with the logit-adjusted margin~\eqref{eqn:xent-margin}, which achieves near perfect test accuracy.
Despite their good performance, we see that the learned embeddings are diffuse.
By contrast, the embeddings for each class become relatively more compact under the proposed ELM regulariser, which we now detail.

\section{ELM: Embedding and Logit Margins}
We now present ELM, a technique that augments margins in both logit
and embedding space.

\subsection{Formulation}

Consider a scorer $f_y( x ) = w_y^\top \Phi( x ) + b_y$,
where $w_y \in \mathbb{R}^K$ are the classification weights for label $y$,
$b_y \in \mathbb{R}$ is a bias term,
and $\Phi( x ) \in \mathbb{R}^K$ the learned embeddings for instance $x$.
The ELM objective is:
\begin{equation}
    \label{eqn:elm}
    \min_{w, \Phi}
    \frac{1}{N} \sum_{(x,y) \in S}
    \left[ \ell_{\rm mar}( y, f( x ) ) + \lambda \cdot  \Omega_{\rm pull}( x, y )   \right],
\end{equation}
where
$\ell_{\rm mar}$ is per~\eqref{eqn:xent-margin},
and $S\defEq\{(x_{i},y_{i})\}_{i=1}^{N}\stackrel{\text{i.i.d.}}{\sim}\bbP^{N}$ is a training sample from the joint distribution $\bbP$.
Furthermore, $\lambda \geq 0$ controls the trade-off between logit margin
(promoted by $\ell_{\rm mar}$)
and embedding margin (encouraged by $\Omega_{\rm pull}$).
Inspired by objectives in metric learning~\citep{Weinberger:2009},
we shall consider
\begin{align*}
    \Omega_{\rm pull}( x, y ) &\defEq \log\left[ 1 +
    \sum\nolimits_{\xPos \in S_y \backslash \{x\}} e^{\| \Phi( x ) - \Phi( \xPos ) \|_2^2 - \alpha_y} \right],
\end{align*}
where $S_y$ denotes the training samples with label $y$.
Intuitively, $\Omega_{\rm pull}$ acts to ``pull'' together embeddings from the same class.
Further, parameters $\{\alpha_y\}_{y \in [L]}$ serve as margins that control the desired slack in enforcing this consideration.
Intuitively, we seek to ensure that rare classes are pulled tightly together (i.e., small $\alpha_y$).
The pull regulariser $\Omega_{\rm pull}$ can be seen as a differentiable relaxation of
\begin{align*}
    \tilde{\Omega}_{\rm pull}( x, y ) &\defEq \max_{\xPos \in S_y \backslash \{x\}}
    \left[  \| \Phi( x ) - \Phi( \xPos ) \|_2^2 -\alpha_y \right]_+,
\end{align*}
which pulls together embeddings
of the same class $y$ so that, on average, each pair is no more than $\alpha_y$ away.
Observe that $\tilde{\Omega}_{\rm pull}(x,y) < \Omega_{\rm pull}(x,y)$.

With more uncertainty associated with rare
classes, it is reasonable to pull their embeddings together more strongly than those from frequent classes.
This implicitly ensures  that embeddings
of rare classes are well-separated from other classes and helps accommodate embeddings of unobserved instances during test time, which may have high variance.
In line with these, we thus propose setting $\alpha_y \propto \Pr(y)^{a}$ where $a > 0$.
In the sequel, we shall focus on $a=1$ or $a  = \frac{1}{2}$.
A similar consideration was made by Samuel and Chechik \cite{SamChe2021},
as shall be detailed in~\S\ref{sec:existing_work}.

\subsection{Connection to Existing Work}
\label{sec:existing_work}
The core elements of the ELM objective~\eqref{eqn:elm} are not without precedent.
For example, the idea of regularising embeddings has been widely explored in the area of contrastive learning~\citep{Wu:2018,Oord:2018,Khosla:2020}.
Similarly, the idea of having the pull term underpins Fisher linear discriminant analysis~\citep{Fisher:1936}.
However, the key to ELM's success in long-tailed problems is enforcing \emph{margins} in both the logit and embedding space, and having these margins be sensitive to the label distribution {$\mathbb{P}(y)$}.
We now
detail the relevant strands of prior work,
and
delineate the key differences to ELM.
(See Table~\ref{tbl:summary}.)

\begin{table*}[!t]
    \centering

    \renewcommand{\arraystretch}{1.25}

    \caption{%
    Summary of approaches to learning with logit and embedding margins.
    Here,
    ``long tail'' refers to whether or not the method explicitly accounts for skew in the label distribution;
    ``logit loss'' refers to whether or not the method explicitly learns logits for classification, as opposed to relying on a $k$-NN classifier;
    and ``embedding loss'' refers to whether or not the method explicitly regularises the embeddings in some way, as opposed to purely operating on logits.
    In long-tail settings,
    enforcing a logit margin is essential to ensure consistency for the balanced error,
    while enforcing an embedding margin is essential to ensure compactness of the embeddings on tail classes.
    Entries marked ``---'' are not applicable.}
    \label{tbl:summary}
    \vspace{2mm}
  \resizebox{\linewidth}{!}{%
    \begin{tabular}{@{}p{10cm}p{0.325in}p{0.5in}p{0.5in}p{0.8in}p{0.8in}@{}}
        \toprule
        \textbf{Method} & \textbf{Long tail?} & \textbf{Logit loss?} & \textbf{Logit margin?} & \textbf{Embedding loss?} & \textbf{Embedding margin?} \\
        \toprule
        Contrastive loss~\citep{Yi:2014,Wu:2018,Oord:2018,He:2019,Chen:2020} & --- & --- & --- & \tick & \cross \\
        Supervised contrastive loss~\citep{Khosla:2020,Chuang:2020} & \cross & \cross & --- & \tick & \cross \\
        Triplet loss~\citep{Weinberger:2009,Hadsell:2006,Schroff:2015,Sohn:2016} & \cross & \cross & --- & \tick & \tick \\
        Spreadout~\citep{Zhang:2017b} & \cross & \tick & \cross & \tick & \cross \\
        Center loss~\citep{Wen:2016} & \cross & \tick & \cross & \tick & \cross \\
        Hybrid contrastive learning~\citep{Liu:2020b} & \cross & \tick & \cross & \tick & \cross \\
        Softmax with margin~\citep{Cao:2019,Tan:2020,Ren:2020,Menon:2013,Wang:2021b} & \tick & \tick & \tick & \cross & --- \\
        Range loss~\citep{Zhang:2017} & \tick & \tick & \cross & \tick & \tick \\
        DRO-LT~\citep{SamChe2021} & \tick & \tick & \cross & \tick & \tick \\
        \midrule
        \rowcolor{cyan!10}
        Ours & \tick & \tick & \tick & \tick & \tick \\
        \bottomrule
    \end{tabular}%
    }
\end{table*}

\paragraph{Contrastive learning}
Contrastive learning~\citep{Wu:2018,Yi:2014,Oord:2018,He:2019,Chen:2020}
techniques seek to learn good representations $\Phi \colon \calX \to \mathbb{R}^K$
by aligning similar instances (e.g., a sample and its perturbation),
and pushing apart dissimilar instances (e.g., pairs of random samples).
This may be achieved by minimising $\Omega_{\rm con}( x ) \defEq $
$$  \mathbb{E}_{\xPos, \mathscr{N}( x )} \log\left[ 1 + \sum\nolimits_{\xNeg \in \mathscr{N}( x )} e^{\Phi( x )^\top \Phi( \xNeg ) - \Phi( x )^\top \Phi( \xPos )} \right], $$
where $\xPos$ is a ``positive'' sample for $x$,
and $\mathscr{N}( x )$ comprises contrasting ``negative'' samples for $x$.
In standard contrastive learning, there is no explicit supervision, and so $\mathscr{N}( x )$ may be taken to be randomly sampled inputs.
In supervised contrastive learning~\citep{Khosla:2020,Chuang:2020},
it is assumed that label information is present, and
$\mathscr{N}( x )$ comprises samples with a different label than $x, \xPos$.
Such techniques do \emph{not} involve a logit margin, and are not adapted to long-tail settings.

Objectives based on the triplet loss take a similar form~\citep{Weinberger:2009,Hadsell:2006,Schroff:2015,Sohn:2016}, with the contrasting set $\mathscr{N}( x )$ comprising one or more negative samples, typically chosen based on some form of negative mining.
Most such objectives enforce an explicit margin, i.e., for $\gamma > 0$,
$$ \Omega_{\rm trip}( x ) \defEq \mathbb{E}_{\xPos, \xNeg} [ \gamma + \Phi( x )^\top \Phi( \xNeg ) - \Phi( x )^\top \Phi( \xPos ) ]_+. $$
Here, $\gamma$ is constant across all samples,
and is thus not attuned to skewed label distributions.

\paragraph{Classification-contrastive hybrids}
Recently,~\citet{SamChe2021} proposed DRO-LT,
which adds the regulariser $\Omega_{\rm dro}( x, y ) \defEq$
\begin{equation}
    \label{eqn:dro-lt}
    \log\left[ \sum_{(x', y') \in S} e^{-\| \Phi( x' ) - \mu_y \|_2^2 + \|  \Phi( x ) - \mu_y \|_2^2 + \epsilon_y \cdot \bbI( y' \neq y )} \right],
\end{equation}
where $S$ is the set of all instance-label pairs, $\epsilon_y \propto 1 /
{\sqrt{\Pr( y )}}$,
and $\mu_y$ is the centroid in the embedding space of all samples in class $y$.
\citet{Wang:2021} proposed a similar loss with $\epsilon_y = 0$.
Like our ELM method, DRO-LT explicitly seeks to improve the quality of learned embeddings for tail classes.
However, there are important distinctions:

\begin{enumerate}[label=(\roman*),itemsep=0pt,topsep=5pt,leftmargin=16pt]
    \item DRO-LT is somewhat pessimistic, in that it pushes away the embedding for a sample $( x, y )$ to \emph{all} other samples $( \xPos, y' )$, regardless of whether $y' = y$.
    (Samples with $y' \neq y$ are however subject to a margin of $\epsilon_y > 0$.)
    As demonstrated in Figure~\ref{fig:intra},
    this can cause the embeddings for a given class to be \emph{more} spread out compared to ERM.
    By contrast,
    we only separate samples from different classes (by the logit-adjusted cross-entropy term),
    and pull together samples from the same class.

    \item we give a unified treatment of margins in both logit and embedding space.
    In particular, we justify our approach in terms of approximation to the Bayes solution (\S\ref{sec:analysis}).
\end{enumerate}

\citet{Liu:2020b} proposed to combine the softmax cross-entropy with a contrastive-like term:
$$ \Omega( x, y ) \defEq \log\left[ 1+\sum\nolimits_{\xNeg \in \mathscr{N}( x )}e^{f_{y}(\xNeg)-f_{y}(x)} \right]. $$
A similar objective was also considered in~\citet{Veit:2020}.
Compared to our approach,
there are two key distinctions.
First,
there is no margin enforced in either term.
Second,
the contrastive term operates in logit space,
and thus changes the target function in a non-trivial manner;
in a long-tail setting, this would erase the consistency guarantees for the balanced error~\citep{Menon:2021}.

\paragraph{Improved embeddings for tail classes}
For long-tail settings,
some works have considered means of
improving embeddings for tail classes.
For example, in~\citet{Zhang:2017}, it was proposed to minimise
$$ \Omega_{\rm range}( x ) \defEq \max_{y \neq y'} [ \gamma - \| \mu_{y} - \mu_{y'} \|_2^2 ]_+, $$
so that different classes' centroids are pushed apart.
There are three important points worth mentioning.
First, it is based on a hard max, which allows for limited gradient propagation.
Second, the margin $\gamma$ is the same for all labels, and is not attuned to tail classes.
Third, this does not consider logit margins, which we demonstrate can lead to suboptimal decision boundaries.

\citet{Yin:2018,Liu:2019} proposed to transfer information from dominant to rare class embeddings directly.
This is an interesting yet orthogonal consideration to improving the spread and separation of tail embeddings;
its fusion with the ideas of the present paper would be of interest in future work.
For discussion of additional related work, see Appendix~\ref{app:related}.

\section{Analysis: Why Does ELM help?}
\label{sec:analysis}
At its core, ELM enforces logit \emph{and} embedding margins.
Both of these help improve performance, as we now argue.

\subsection{Why Do Margins Help?}

The case for logit margins has already been made in prior work~\citep{Cao:2019,Tan:2020,Ren:2020,Menon:2021,Wang:2021b}, but is worth succinctly recapitulating.
There are two key arguments:
first, for generic supervised learning problems,
margin bounds~\citep{Bartlett:1998,Koltchinskii:2002,Bartlett:2017} establish that large margins imply good generalisation.
Second,
for long-tail problems in particular,
excluding logit margins would implicitly seek to model $\Pr( y \mid x )$;
absent further correction,
this solution will be suboptimal for the balanced error~\citep{Menon:2013,Collell:2016,Ren:2020,Menon:2021}.

Embedding margins are useful if we want to use a $k$-nearest neighbour classifier as a post-hoc training procedure.
Such post-hoc training procedures have proven successful in long-tail settings~\citep{Kang:2020}.
More fundamentally, however,
one may justify the regularisation of these embeddings from the perspective of approximating the Bayes-optimal decision boundary, as we now see.

\subsection{ELM and the Bayes-Optimal Classifier}

We now quantify the value of the pull term in ELM~\eqref{eqn:elm}.
\begin{prop}
\label{prop:pull_and_variance}
Let $\bar{\Omega}_{\mathrm{pull}}(y)\stackrel{.}{=}\frac{1}{|S_{y}|}\sum_{x\in S_{y}}\Omega_{\mathrm{pull}}(x,y)$
and
$\Phi(x)\stackrel{.}{=}\left(\Phi_{1}(x),\ldots,\Phi_{K}(x)\right)^{\top}$.
Then,
$$ \frac{2|S_{y}|}{|S_{y}|-1} \cdot \sum_{j=1}^{K}\hat{\mathbb{V}}[\Phi_{j}(x)\mid y]-\alpha_{y}+ \log\left(|S_{y}|-1\right)\le\bar{\Omega}_{\mathrm{pull}}(y). $$
\vspace{-4mm}
\end{prop}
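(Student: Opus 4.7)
The plan is to lower-bound $\Omega_{\mathrm{pull}}(x,y)$ pointwise by a quantity that is linear in squared distances, then average over $x\in S_y$ and invoke the classical identity that relates average pairwise squared distances to empirical variance.

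First I would drop the ``$1+$'' inside the logarithm: since every summand is nonnegative,
\[
\Omega_{\mathrm{pull}}(x,y)\;\ge\;\log\!\Bigl[\sum_{x^{+}\in S_{y}\setminus\{x\}}e^{\|\Phi(x)-\Phi(x^{+})\|^{2}-\alpha_{y}}\Bigr].
\]
Writing the sum as $(|S_y|-1)$ times its mean and using concavity of $\log$ (Jensen's inequality) to push the logarithm through the average yields
\[
\Omega_{\mathrm{pull}}(x,y)\;\ge\;\log(|S_y|-1)-\alpha_y+\frac{1}{|S_y|-1}\sum_{x^{+}\in S_y\setminus\{x\}}\|\Phi(x)-\Phi(x^{+})\|^{2}.
\]
This is the step that converts the log-sum-exp into something genuinely quadratic in $\Phi$.

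Next I would average this inequality over $x\in S_y$. The constant terms $\log(|S_y|-1)$ and $-\alpha_y$ pass through unchanged, and the remaining quantity is
\[
\frac{1}{|S_y|(|S_y|-1)}\sum_{x\in S_y}\sum_{x^{+}\in S_y\setminus\{x\}}\|\Phi(x)-\Phi(x^{+})\|^{2},
\]
which is the mean pairwise squared Euclidean distance inside class $y$. The final step is the standard identity: for any vectors $v_1,\dots,v_n$ in $\mathbb{R}^K$ with sample mean $\bar v$,
\[
\sum_{i,j=1}^{n}\|v_i-v_j\|^{2}\;=\;2n\sum_{i=1}^{n}\|v_i-\bar v\|^{2},
\]
proved by expanding the square and using $\sum_i (v_i-\bar v)=0$. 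Coordinatewise this gives $\sum_{x,x^+}\|\Phi(x)-\Phi(x^+)\|^2 = 2|S_y|^2\sum_{j=1}^K\hat{\mathbb{V}}[\Phi_j(x)\mid y]$ (with the $1/|S_y|$ convention for $\hat{\mathbb{V}}$), so dividing by $|S_y|(|S_y|-1)$ produces exactly the prefactor $\frac{2|S_y|}{|S_y|-1}$ appearing in the claim.

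The only real obstacle is bookkeeping: one must ensure the variance normalization convention matches, and check that the two inequalities used (dropping the $+1$ and Jensen) are both in the correct direction for a \emph{lower} bound on $\bar\Omega_{\mathrm{pull}}$. Both are, since $\log(1+z)\ge\log z$ for $z>0$ and Jensen for concave $\log$ gives $\log\mathbb{E}[\cdot]\ge\mathbb{E}[\log\cdot]$. Combining the three displays above yields the stated inequality.
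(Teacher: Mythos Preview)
Your argument is correct. The three ingredients---dropping the ``$1+$'' via $\log(1+z)\ge\log z$, applying Jensen's inequality to the concave logarithm to linearize the log-sum-exp, and then invoking the pairwise-distance/variance identity $\sum_{i,j}\|v_i-v_j\|^2=2n\sum_i\|v_i-\bar v\|^2$---are exactly what is needed, and both inequalities point in the right direction for a lower bound on $\bar\Omega_{\mathrm{pull}}(y)$. The normalization check is also right: with the $1/|S_y|$ convention for $\hat{\mathbb{V}}$, the mean pairwise squared distance equals $\tfrac{2|S_y|}{|S_y|-1}\sum_j\hat{\mathbb{V}}[\Phi_j(x)\mid y]$, matching the claimed prefactor.

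The paper itself does not include a proof of this proposition, so there is nothing to compare against; your derivation would serve as a complete proof and is in fact the natural one.
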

Here,
$\hat{\mathbb{V}}[\cdot\mid y]$ is the empirical conditional variance of class $y$.
Proposition \ref{prop:pull_and_variance} (proof in Appendix~\ref{sec:proof_pull_and_variance}) states that
minimizing the class-wise average of the pulling objective
$\Omega_{\rm pull}$ will
also minimize the sum of class-conditional variances (in the embedding space) of all dimensions;
i.e., the pulling objective encourages a small intra-class
variance.
As shall be seen later in Proposition~\ref{prop:gen_bound_pull},
reduction of class-conditional variances directly translates to
better generalisation.
Expanding the result stated in Proposition \ref{prop:pull_and_variance}, we have
\begin{align*}
& \frac{1}{|S_{y}|}\sum\nolimits_{x\in S_{y}}\Omega_{\mathrm{pull}}(x,y) \\
& \ge  \frac{2|S_{y}|}{|S_{y}|-1} \sum\nolimits_{x\in S_{y}}\|\Phi(x)-\mu_{y}\|^{2}-\alpha_{y}+ \log\left(|S_{y}|-1\right),
\end{align*}
where $\mu_y$ is the centroid of all embeddings in class $y$.
The first term on the right hand side is exactly the regulariser in the center loss~\citep{Wen:2016}.
This may be interpreted as encouraging a more Gaussian distribution for the embeddings:
indeed,
\begin{align*}
& \sum\nolimits_{x \in S_y} \| \Phi( x ) - \mu_y \|_2^2 \\
& = \sum\nolimits_{x \in S_y} -\log
\mathsf{N}( \Phi( x ); \mu_y, \sigma^2_y I ) + \textrm{constant},
\end{align*}
i.e.,
it is the log-likelihood under an isotropic Gaussian model for $\mathsf{Z} \mid \mathsf{Y} = y$, where we define the random vector $\mathsf{Z} \defEq \Phi( \mathsf{X} ) $ and $\mathsf{X} \sim \Pr( x )$.
The value of such a model is that
\emph{it justifies the use of a softmax distribution for the final layer}.
In particular,
when $\mathsf{Z} \mid \mathsf{Y} = y \sim \mathsf{N}( \mu_y, \sigma^2_y I )$,
\begin{align*}
    \Pr( y \mid z ) &\propto \Pr( z \mid y ) \cdot \Pr( y ) \\
    & \propto \exp\left( \Phi( x )^\top \frac{\mu_y}{\sigma^2_y} - \frac{\| \mu_y \|_2^2}{2 \sigma^2_y} + \log \Pr(y) \right),
\end{align*}
where $z = \Phi(x)$ denotes a realization of $\mathsf{Z}$.
Thus, under a Gaussian distribution for embeddings, we may perfectly express $\Pr( y \mid x )$ as an affine function of $\Phi( x )$, composed with a softmax link function.

\subsection{Generalisation Bound}

We now present a generalisation bound of the logit-adjusted cross-entropy loss \eqref{eqn:xent-margin} in Proposition~\ref{prop:gen_bound_pull}.
We consider the binary case where $y \in \{-1,+1\}$.
In this case, it is sufficient to consider a real-valued scorer
$f(x) \stackrel{.}{=} w^\top \Phi(x) + b \in \mathbb{R}$ that computes the  logit
for class $+1$.
Accordingly, the logit-adjusted cross-entropy loss in \eqref{eqn:xent-margin} can
be written as
$\ell_{\mathrm{log}}(y,f(x)+\Delta_{y}):=\log\left[1+e^{-y(f(x)+\Delta_{y})}\right]$.

\begin{prop}
\label{prop:gen_bound_pull}
Let $\Delta_{y}\in\mathbb{R}$, and
$\ell_{\mathrm{log}}(y,f(x)+\Delta_{y}):=\log\left[1+e^{-y(f(x)+\Delta_{y})}\right]$.
Suppose $y\in\{-1,1\}$, and $\sup_{x\in\mathcal{X},y\in\{-1,1\}}\ell_{\mathrm{log}}(y,f(x)+\Delta_{y})\le B$
for some $B\in(0,\infty)$. Then, given $f(x)\stackrel{.}{=}w^{\top}\Phi(x)+b\in\mathbb{R}$,
with probability at least $1-\delta$,
\begin{align*}
 & \mathbb{E}_{(x,y)\sim\mathbb{P}_{xy}}\ell_{\mathrm{log}}(y,f(x)+\Delta_{y})\\
 & \le\frac{7B\ln2/\delta}{3}\sum_{y\in\{-1,1\}}\frac{\mathbb{P}(y)}{|S_{y}|-1}
  +\sum_{y\in\{-1,1\}}\frac{\mathbb{P}(y)}{|S_{y}|}\sum_{x\in S_{y}}\ell_{\mathrm{log}}(y,f(x)+\Delta_{y})\\
 & \phantom{\le}+\|w\|\sqrt{\ln\frac{2}{\delta}\sum_{y\in\{-1,1\}}\frac{\mathbb{P}(y)}{|S_{y}|}\left[\bar{\Omega}_{\mathrm{pull}}(y)+\alpha_{y}\right]},
\end{align*}

where $\bar{\Omega}_{\mathrm{pull}}(y)\stackrel{.}{=}\frac{1}{|S_{y}|}\sum_{x\in S_{y}}\Omega_{\mathrm{pull}}(x,y)$.
\end{prop}

Proposition \ref{prop:gen_bound_pull} suggests that the generalisation gap of
classifiers trained with the softmax cross-entropy with logit margins (see
\eqref{eqn:xent-margin}) can be expressed as a function of the proposed pull
term $\Omega_{\mathrm{pull}}$. This justifies its use in the ELM (see
\eqref{eqn:elm}) as encouraging better generalisation.
The improvement from adding a pull term to the logit-adjusted loss
results in a larger gap between the per-class logits, and a more
compact per-class distribution of embeddings, as shall be seen in \S\ref{sec:experiments}.

Note that the difference between
$\frac{1}{N}\sum_{(x,y) \in S} \ell_{\mathrm{log}}(y,f(x)+\Delta_{y})$
and
$\sum_{y\in\{-1,1\}}\frac{\mathbb{P}(y)}{|S_{y}|}\sum_{x\in S_{y}}\ell_{\mathrm{log}}(y,f(x)+\Delta_{y})$
vanishes as $N \to \infty$.
The former thus approximates the empirical error
of the logit-adjusted loss $\ell_{\rm mar}$, and aligns with our formulation in \eqref{eqn:elm}.
Further note that while the generalisation error on the left-hand side
of the bound is with respect to the joint distribution (which depends on the skewed label distribution $\bbP(y)$), it is consistent for minimising the balanced error where the label distribution is uniform when $\Delta_y = \log \frac{\bbP(y=+1)}{\bbP(y=-1)}$ \citep{Menon:2021}.

\section{Experiments on Long-tail Benchmarks}
\label{sec:experiments}

\begin{table*}[t]
    \centering
    \renewcommand{\arraystretch}{1.25}

    \caption{
    Test set accuracy (averaged over $3$ trials) on real-world datasets.
    Here, $^\star$, $^\ddagger$, $^\dagger$, $^\diamond$ are numbers for
    ``$\tau$-normalised'' from~\citet[Table 3, 7]{Kang:2020};
    ``Class-Balanced'' from~\citet[Table 2, 3]{Cui:2019};
    ``LDAM + DRW'' from~\citet[Table 2, 3]{Cao:2019};
    ``DRO-LT'' from~\citet[Table 1, 2]{SamChe2021}.
    The \fade{faded} cells denote ``multi-stage'' methods requiring training multiple models.
    The others, including our proposed ELM, perform training in one-stage. Our goal is to understand the performance we can get from a one-stage training procedure.
    }
    \label{tbl:results}
    \vspace{2mm}

    \resizebox{0.975\linewidth}{!}{
    \begin{tabular}{@{}lllll@{}}
        \toprule
        \textbf{Method} & \textbf{CIFAR10-LT} & \textbf{CIFAR100-LT} & \textbf{ImageNet-LT} & \textbf{iNaturalist} \\
        \toprule
        Cross-entropy (CE)                 &
        72.84 &
        38.36 &
        45.20 &
        61.34 \\
        CB Focal~\citep{Cui:2019}             &
        74.57$^\ddagger$ &
        39.60$^\ddagger$ &
        46.79 & %
        64.16$^\ddagger$ \\
        LDAM + DRW~\citep{Cao:2019}
        &
        77.03$^\dagger$ &
        42.04$^\dagger$ &
        50.15 &
        68.00$^\dagger$ \\
        LogAdj~\citep{Ren:2020,Menon:2021,Wang:2021b} &
        {77.67}      &
        {43.89} &
        50.37 &
        {66.36} \\
        CE + DRO-LT (one-stage)~\citep{SamChe2021} &
        72.70 &
        41.98 &
        45.70 &
        62.05 \\
        \midrule
        \fade{CE + weight normalisation (multi-stage)}~\citep{Kang:2020} &
        \fade{78.50} &
        \fade{41.34} &
        \fade{50.63} &
        \fade{65.60$^\star$} \\ %
        \fade{CE + DRO-LT (multi-stage)}~\citep{SamChe2021} &
        \fade{80.50} &
        \fade{46.92$^\diamond$} &
        \fade{53.00$^\diamond$} &
        \fade{69.70$^\diamond$} \\
        \midrule
        \textbf{ELM} (proposed, one-stage) &
        77.95 &
        45.77 &
        50.60 &
        68.71 \\
        \bottomrule
    \end{tabular}
    }
\end{table*}

We present results confirming that
ELM performs well on benchmarks for long-tail learning.

\paragraph{Datasets}
We present results on image classification benchmarks for long-tail learning:
CIFAR10-LT, CIFAR100-LT, ImageNet-LT and iNaturalist 2018.
Each of these datasets has a skewed training set, and balanced test set.
The long-tailed (``LT'') CIFAR datasets
are constructed by downsampling labels
from the original CIFAR train sets,
following the
{\sc Exp} profile of~\citet{Cui:2019,Cao:2019} with imbalance ratio
$\rho = {\max_{y} \Pr( y )} / {\min_{y} \Pr( y )} = 100$.
The long-tailed ImageNet dataset is as constructed in~\citet{Liu:2019},
and iNaturalist as per~\citet{VanHorn:2017}.

\paragraph{Models}
We employ a CIFAR-ResNet-32 for the CIFAR datasets,
and a ResNet-50 for ImageNet and iNaturalist.
See Appendix~\ref{app:hyperparameters} for details on training hyper-parameters, which follow~\citet{Menon:2021}.

\paragraph{Baselines}
We compare the proposed ELM method~\eqref{eqn:elm}
in terms of \emph{balanced} test set accuracy
against several baselines:
\begin{enumerate*}[label=(\roman*)]
    \item cross-entropy ({\bf CE}) minimisation;
    \item the class-balanced ({\bf CB Focal}) loss of~\citet{Cui:2019}, which applies asymmetric weights on the per-class losses;
    \item {\bf LDAM+DRW}~\citep{Cao:2019}, which enforces a logit margin;
    \item the logit adjustment ({\bf LogAdj}) loss~\citep{Ren:2020,Menon:2021,Wang:2021b}, which enforces a logit margin per~\eqref{eqn:xent-margin};
    \item {\bf DRO-LT}~\citep{SamChe2021}, which enforces an embedding margin $\epsilon_y \propto {\Pr(y)}^{-1/2}$.
\end{enumerate*}
For ELM, we set the logit margin $\Delta_{yy'} = \log \frac{\Pr( y' )}{\Pr( y )}$,
following the {\bf LogAdj} loss;
thus, ELM imposes additional regularisers over this method.
We detail the choice of $\alpha_y$ in the Appendix.

\paragraph{One- versus multi-stage methods}
We remark here that
DRO-LT is a ``multi-stage'' method, as it requires first obtaining centroid estimates $\mu_y$ from CE training;
training using these, plus the DRO-LT regulariser~\eqref{eqn:dro-lt};
and finally training a balanced classifier on the resulting embeddings.
By contrast, all other baselines --- and the proposed ELM --- are ``one-stage'' methods.
Thus, for an equitable comparison, we consider a ``one-stage'' version of DRO-LT, which does not have a separate estimation of $\mu_y$, nor retraining of a linear model on the learned embeddings.
For completness, we additionally quote the results of
multi-stage DRO-LT from~\citet{SamChe2021}.
As another multi-stage baseline, we report the results of
post-hoc weight normalisation~\citep{Kang:2020} on the CE solution.

An important goal of our experiments is to understand the performance (balanced accuracy) we can get from training in one stage.
This helps understand losses for long-tailed learning without being occluded by benefits gained from the more generally applicable augmented procedures such as balanced sampling, and fine-tuning the classifier layer.

\begin{figure*}[t]
    \centering

    \resizebox{0.99\linewidth}{!}{
    \subcaptionbox{CIFAR100-LT (100 classes).}{\includegraphics[scale=0.2]{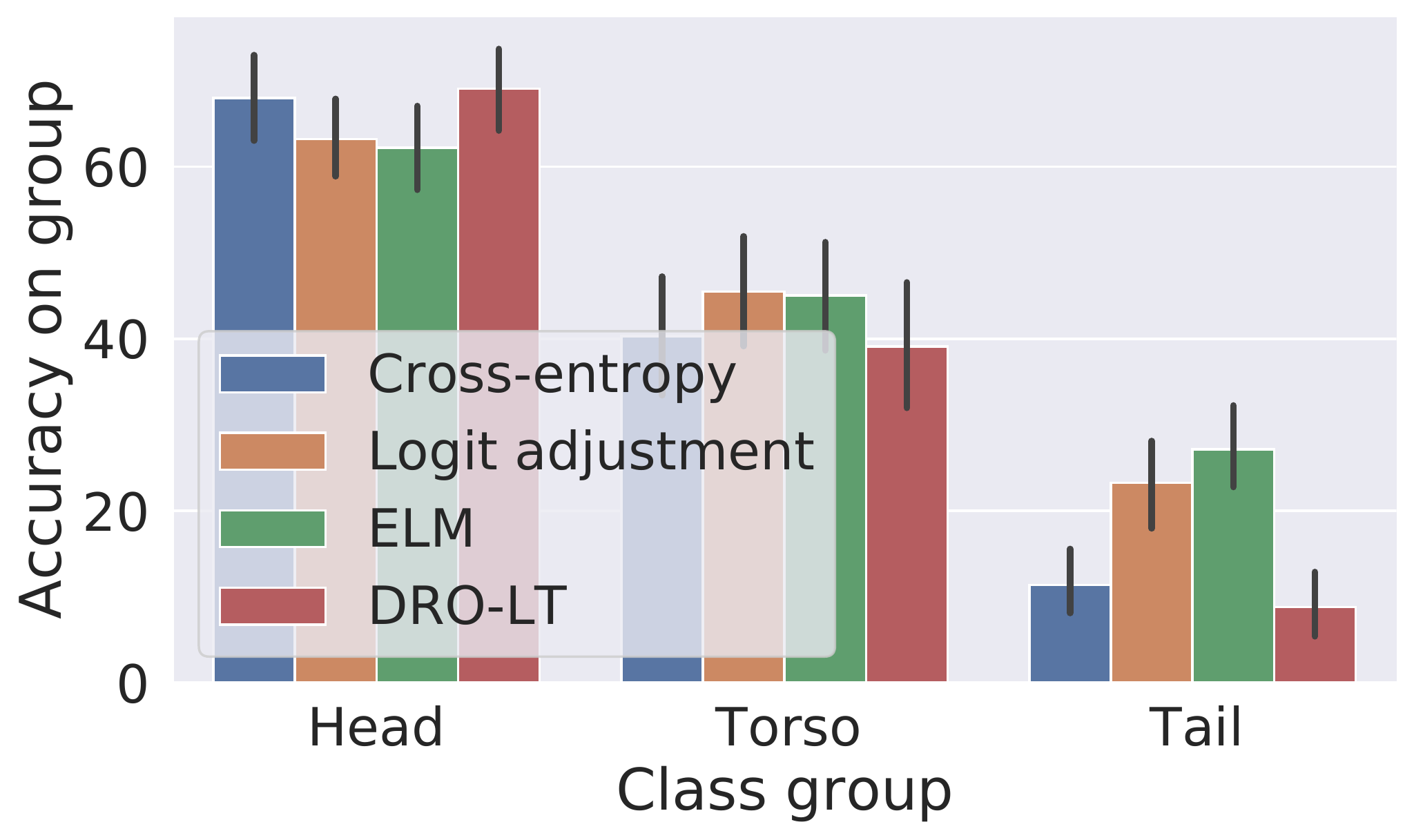}}
    \quad
    \subcaptionbox{Imagenet-LT (1000 classes).}{\includegraphics[scale=0.2]{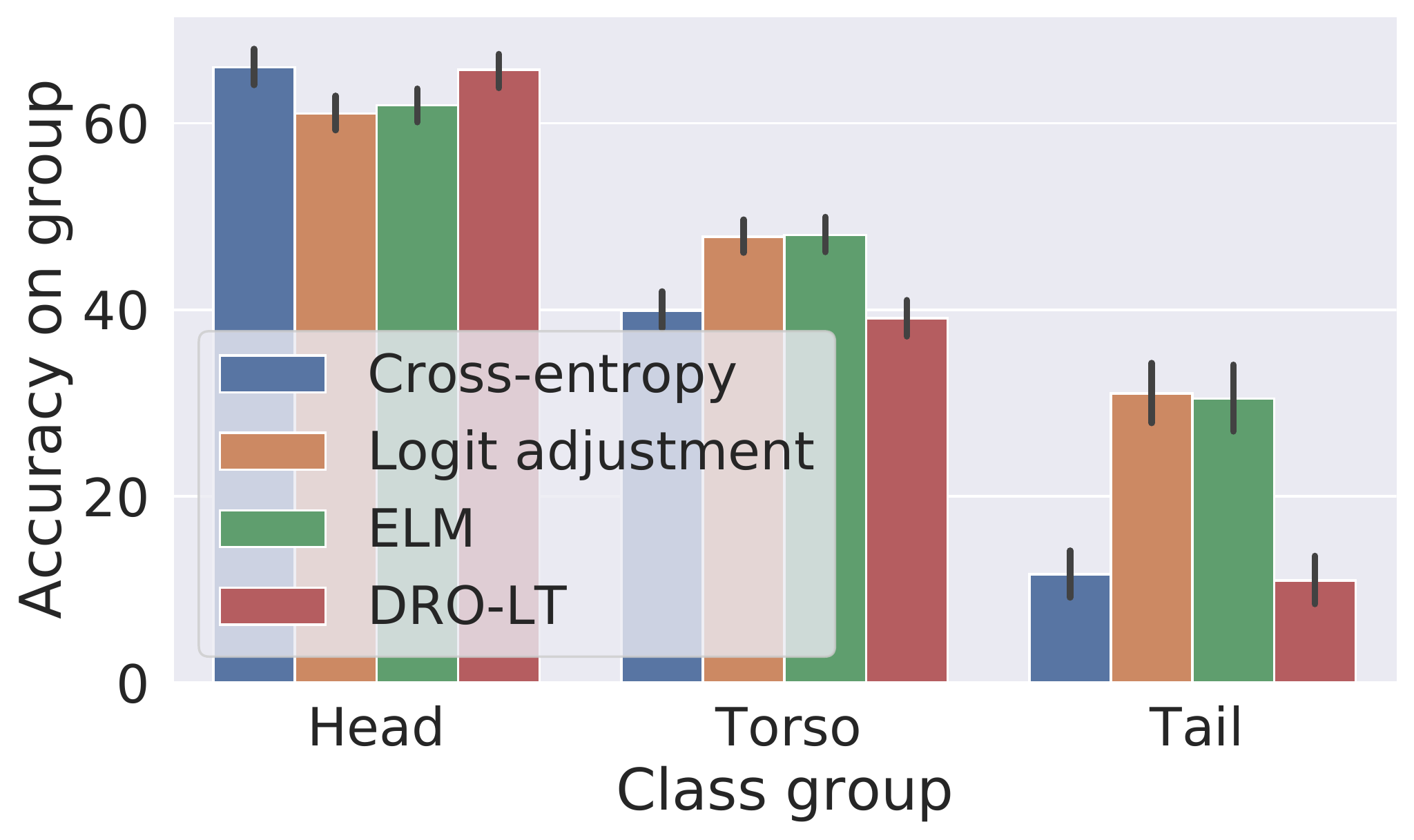}}
    \quad
    \subcaptionbox{iNaturalist (8142 classes).}{\includegraphics[scale=0.2]{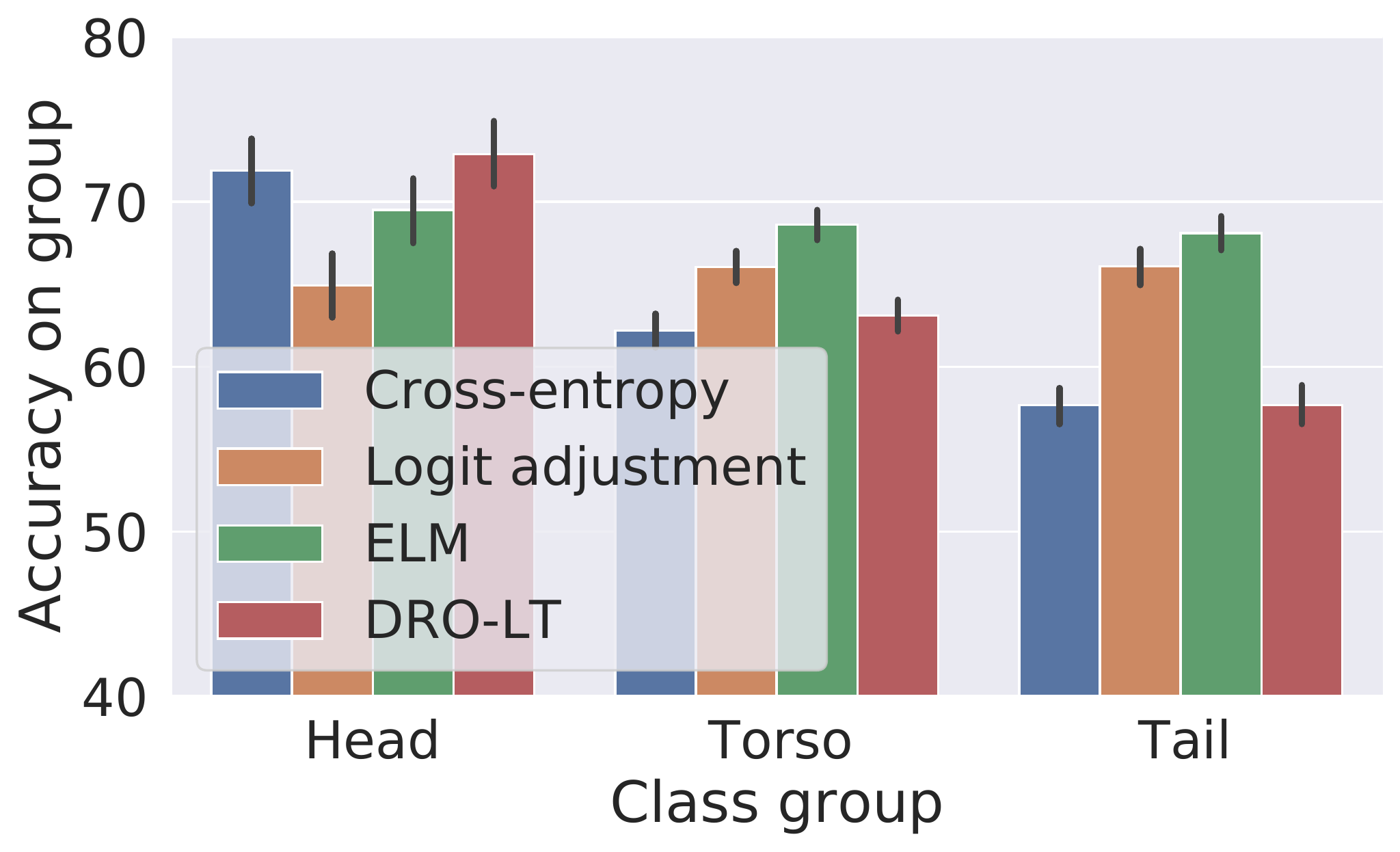}}
    }

    \caption{Breakdown of per-label accuracies, where labels are bucketed into groups based on their frequency.
    Compared to enforcing logit margins,
    additionally
    enforcing embedding margins are seen to help
    as ELM improves upon LogAdj on the Torso and Tail groups.
}
    \label{fig:breakdown}
\end{figure*}
\paragraph{Results and analysis}
Table~\ref{tbl:results} presents the results on all datasets.
We make some key observations.
\emph{Logit and embedding margins help}.
In keeping with prior work, approaches that enforce a logit margin, e.g., LDAM~\citep{Cao:2019}, and LogAdj~\citep{Ren:2020,Menon:2021,Wang:2021b},
perform significantly better than CE.
Similarly, one-stage CE + DRO-LT,
which enforces an embedding margin,
consistently improves over CE.

Compared to these techniques, combining both logit and embedding margins
yields improvements, as shown by the performance of ELM,
intuitively owing to it encouraging pulling together of similar embeddings.
Interestingly, even when compared to a multi-stage version of DRO-LT, the performance of our one-stage ELM remains favourable, with only a small difference across all datasets.

\emph{Breakdown of performance}.
The above illustrates the ELM can improve the overall tradeoff between rare and dominant labels.
For a more fine-grained understanding,
following~\citet{Kang:2020},
we break down the labels into three groups,
termed ``Head'' (labels with $\geq 100$ training samples),
``Torso'' ($[ 20, 100)$ samples), and
``Tail'' ($< 20$ samples).
Figure~\ref{fig:breakdown} reveals that, per~\citet{Menon:2021},
LogAdj achieves gains on both the Torso and Tail groups, at the mild expense of performance on the Head group.
Further adding an embedding margin via ELM yields gains on the Torso and Tail groups.
Interestingly, on the challenging iNaturalist data, there are gains on the Head group as well.

\begin{figure*}[t]
    \centering
    \resizebox{0.99\linewidth}{!}{
    \subcaptionbox{Label 10 (Head).}{
    \includegraphics[scale=0.25]{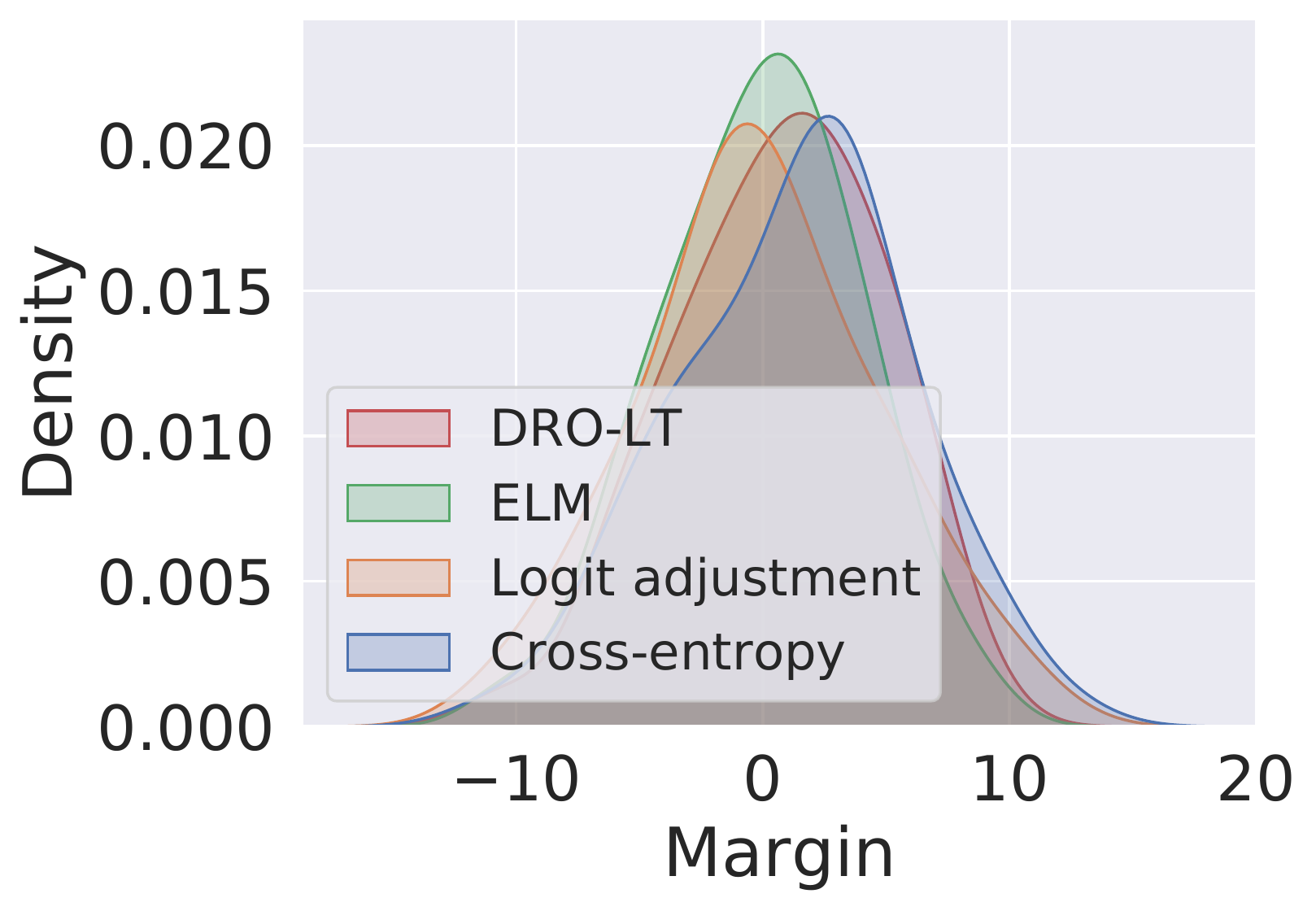}
    }
    \quad
    \subcaptionbox{Label 50 (Torso).}{
    \includegraphics[scale=0.26]{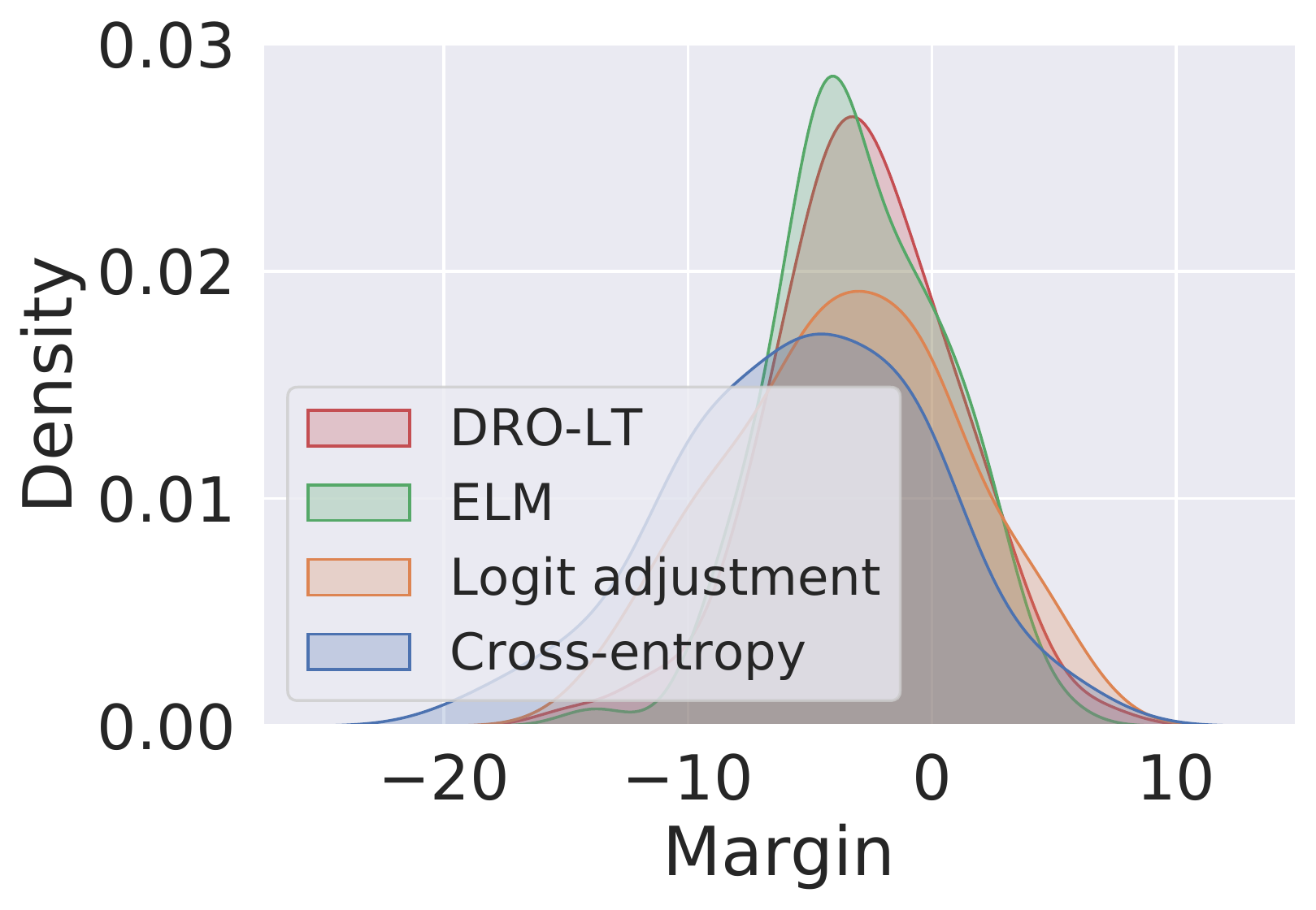}
    }
    \quad
    \subcaptionbox{Label 90 (Tail).}{
    \includegraphics[scale=0.25]{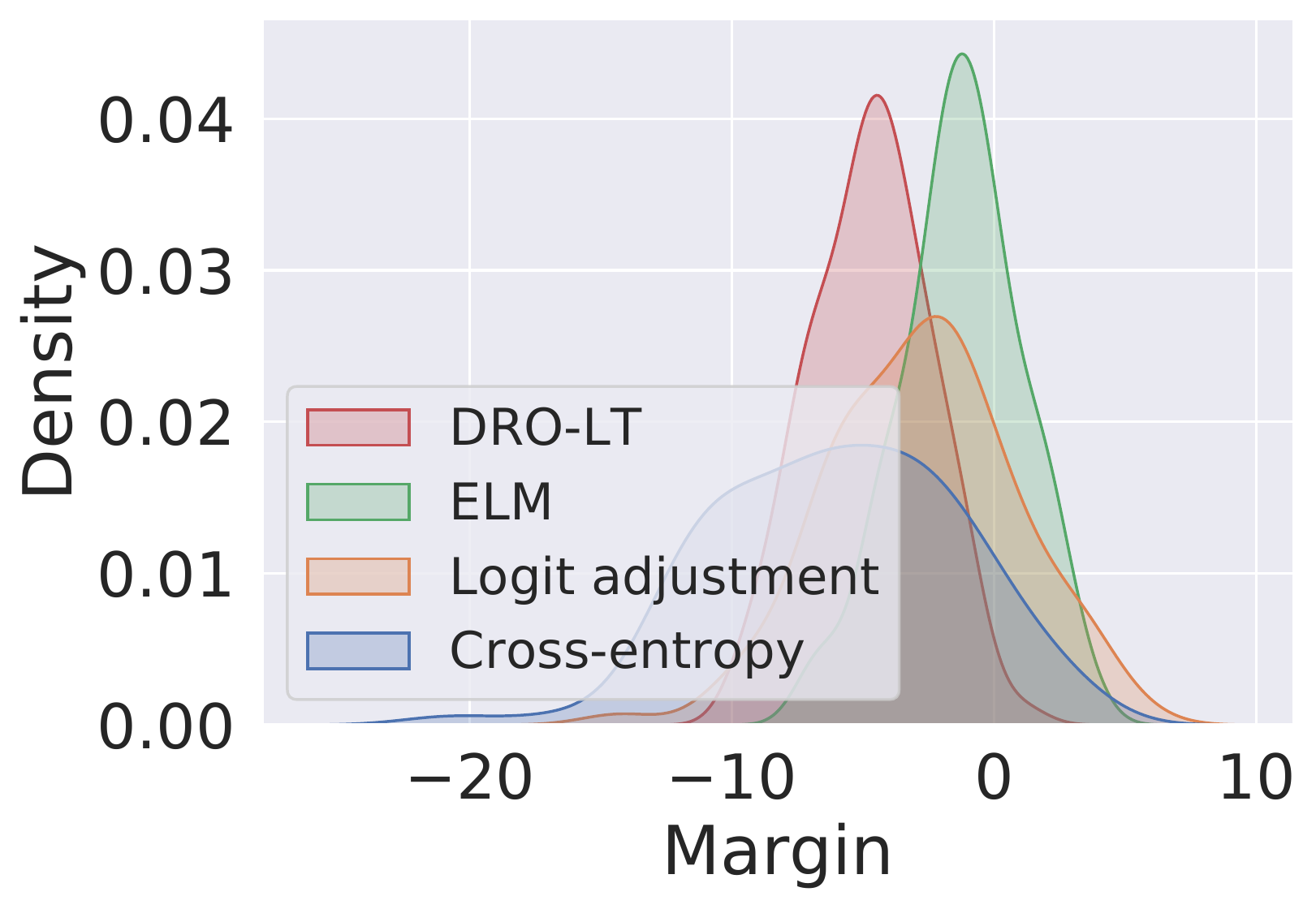}
    }
    }
    \caption{Margin plots on CIFAR100-LT.
    For a given label $y$, we plot the distribution of the margins $\gamma( x, y ) \defEq f_y( x ) - \max_{y' \neq y} f_{y'}( x )$ for instances $x$ with label $y$.
    ELM is seen to consistently reduce the variance of the margin distribution,
    while also shifting it favourably over CE on the Tail label.
    }
    \label{fig:margin}
    \vspace{-2mm}
\end{figure*}

\begin{figure*}[!t]
    \centering
    \resizebox{0.99\linewidth}{!}{
    \subcaptionbox{Label 10 (Head).}{
    \includegraphics[scale=0.24]{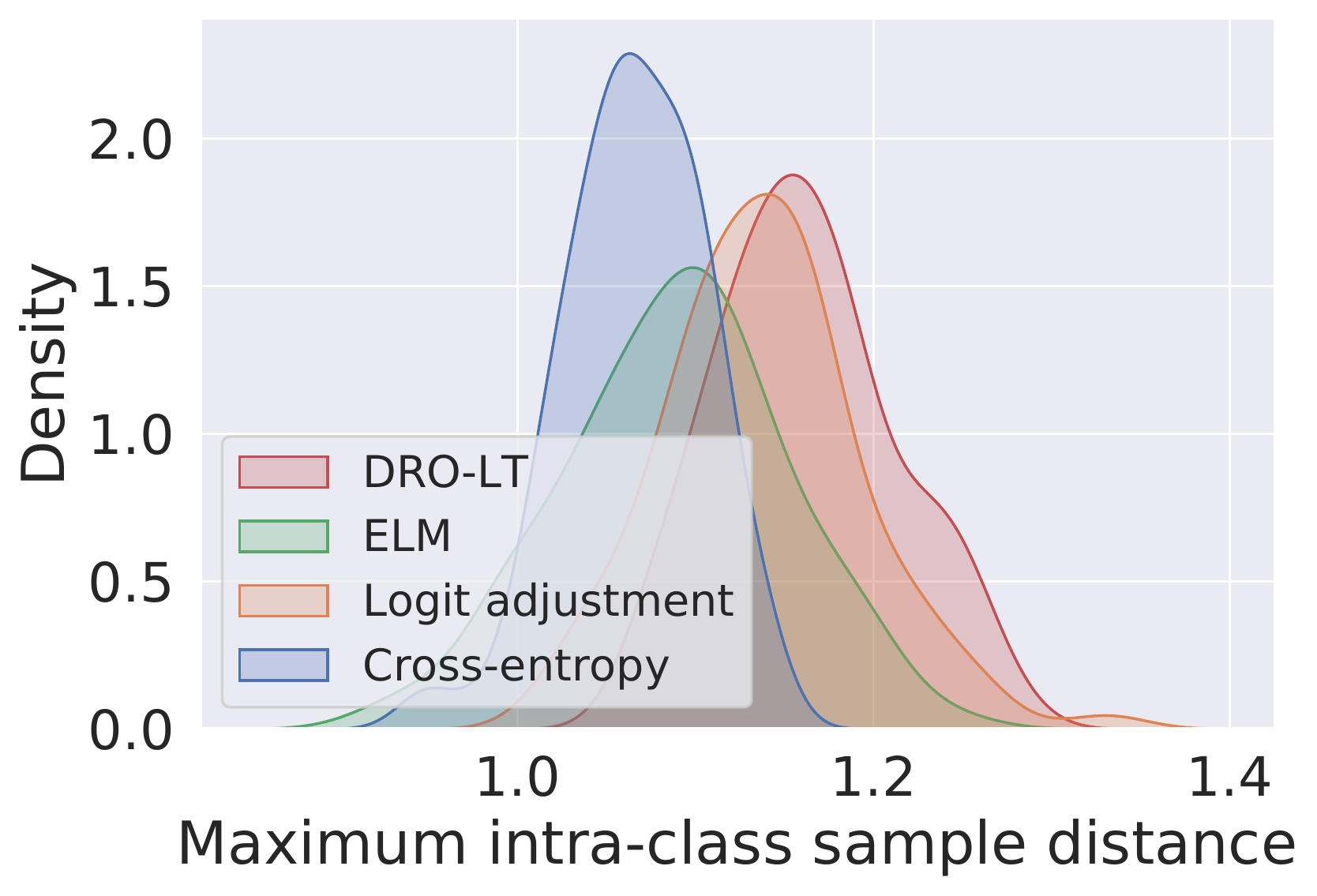}
    }
    \quad
    \subcaptionbox{Label 50 (Torso).}{
    \includegraphics[scale=0.24]{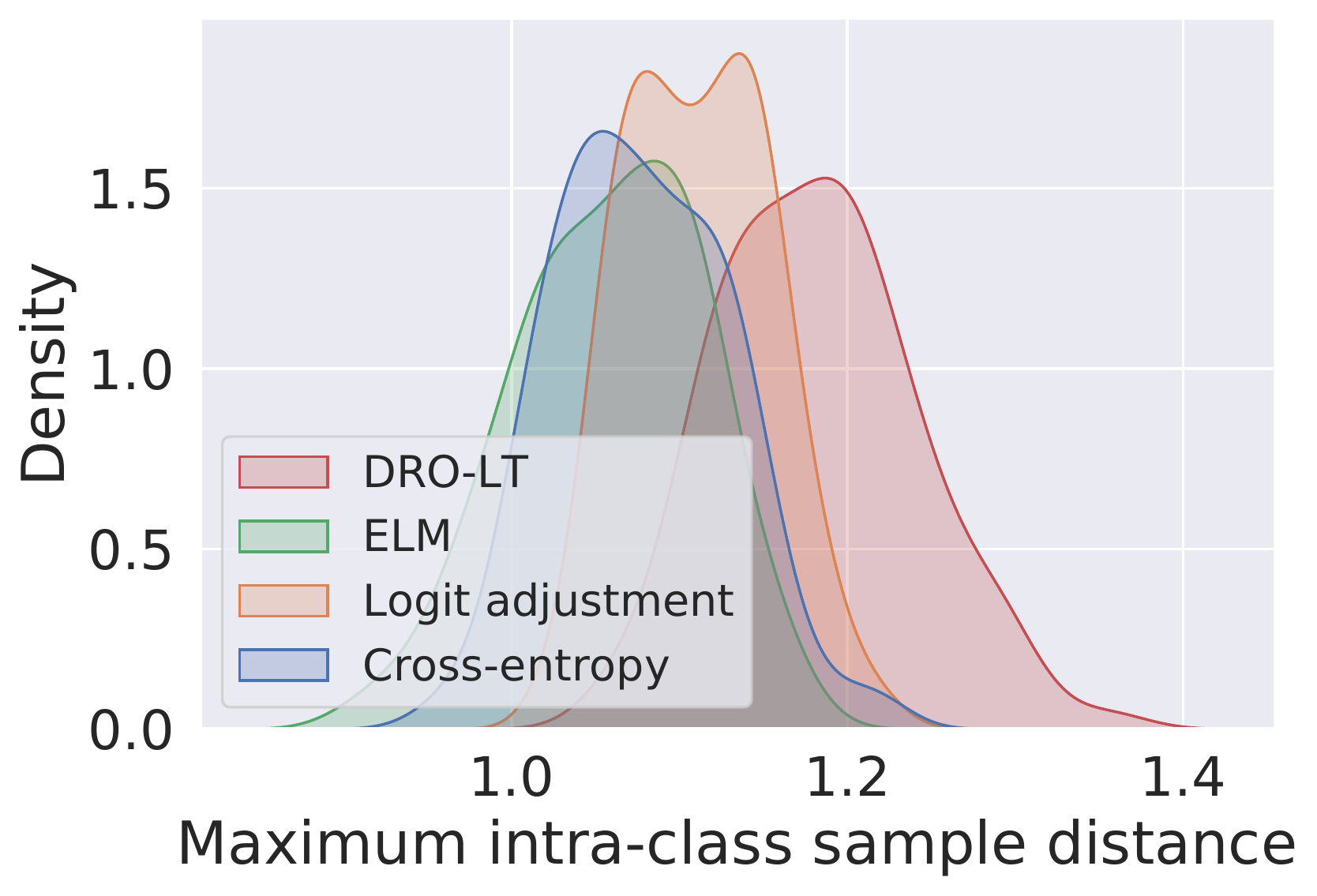}
    }
    \quad
    \subcaptionbox{Label 90 (Tail).}{
    \includegraphics[scale=0.24]{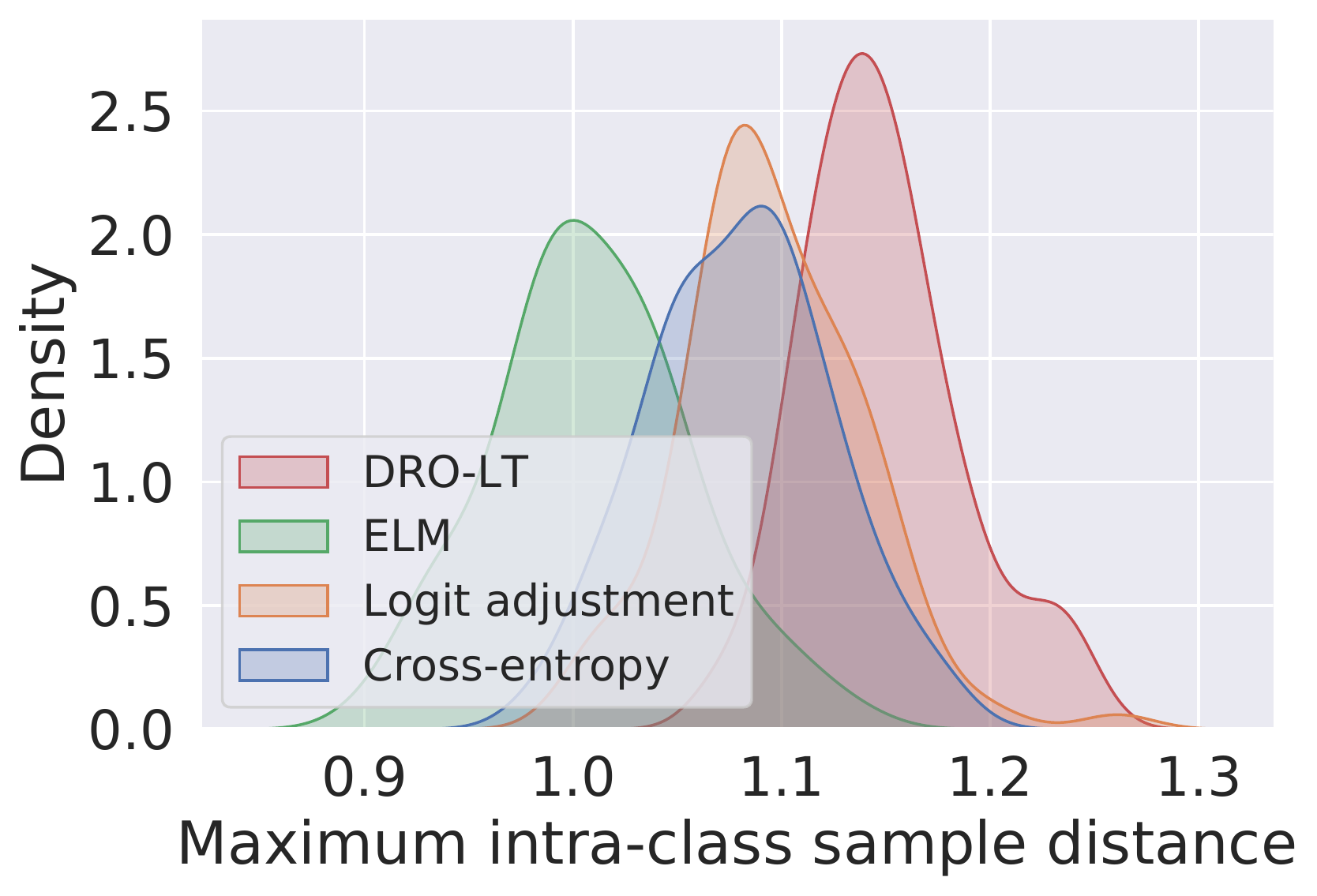}
    }
    }
    \caption{Maximum intra-class distances on CIFAR100-LT.
    For a given label $y$, we plot the distribution of the maximum intra-class distances $d_{\rm max}( x ) \defEq \max_{\xPos \in S_y} \| \Phi( x ) - \Phi( \xPos ) \|_2$ for instances $x$ with label $y$;
    distances are normalised by the maximal embedding norm $\max_{x \in S} \| \Phi( x ) \|_2$.
    DRO-LT is seen to increase these distances slightly, owing to contrasting samples within the same class.
    }
    \vspace{-3mm}
    \label{fig:intra}
\end{figure*}

\begin{figure*}[!t]
    \centering
    \subcaptionbox{LogAdj CE}{
    \includegraphics[width=0.25\textwidth, trim=0 0 140 0, clip]{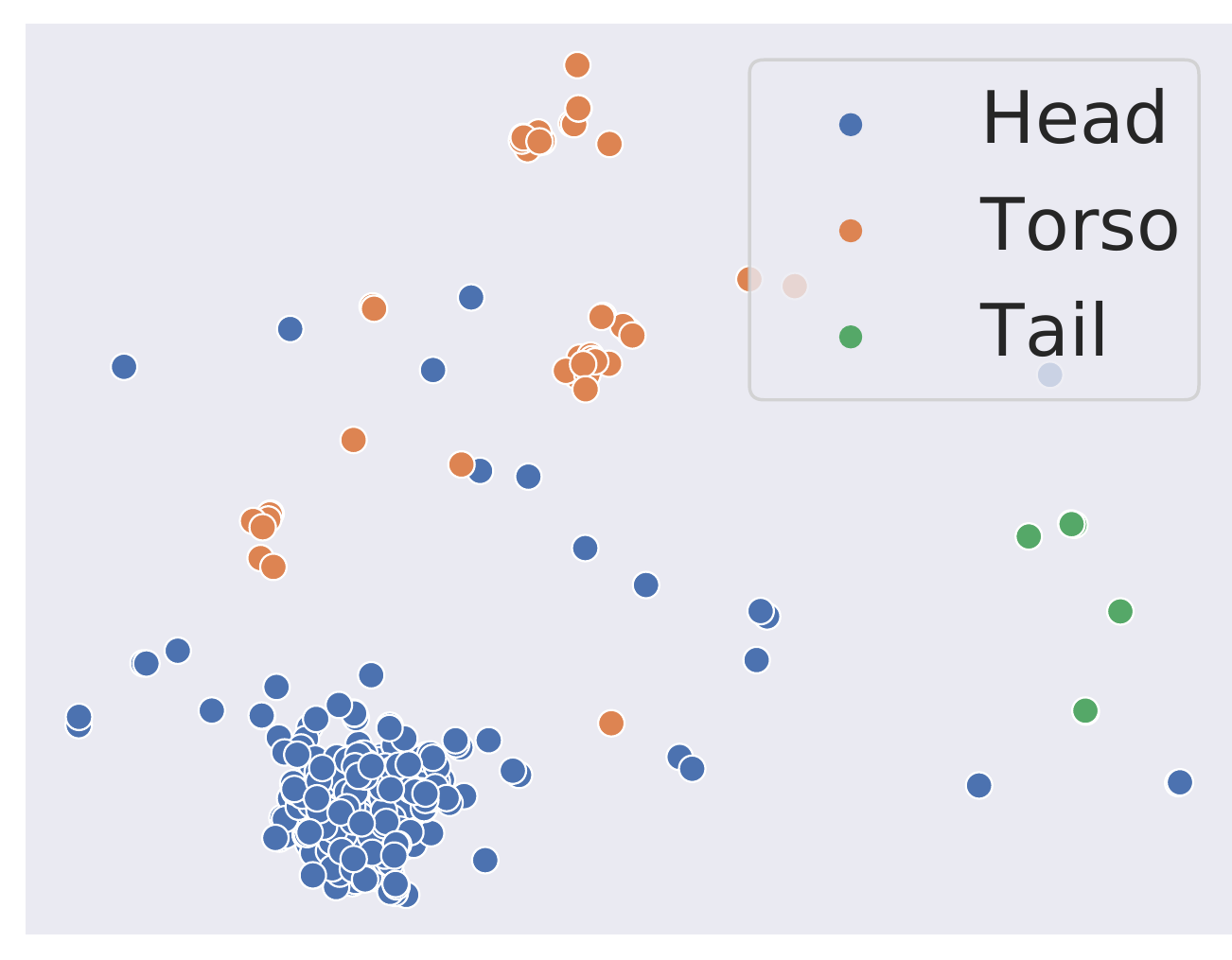}
    }
    \hspace{1cm}
    \subcaptionbox{ELM.}{
    \includegraphics[width=0.25\textwidth,trim=0 0 140 0, clip]{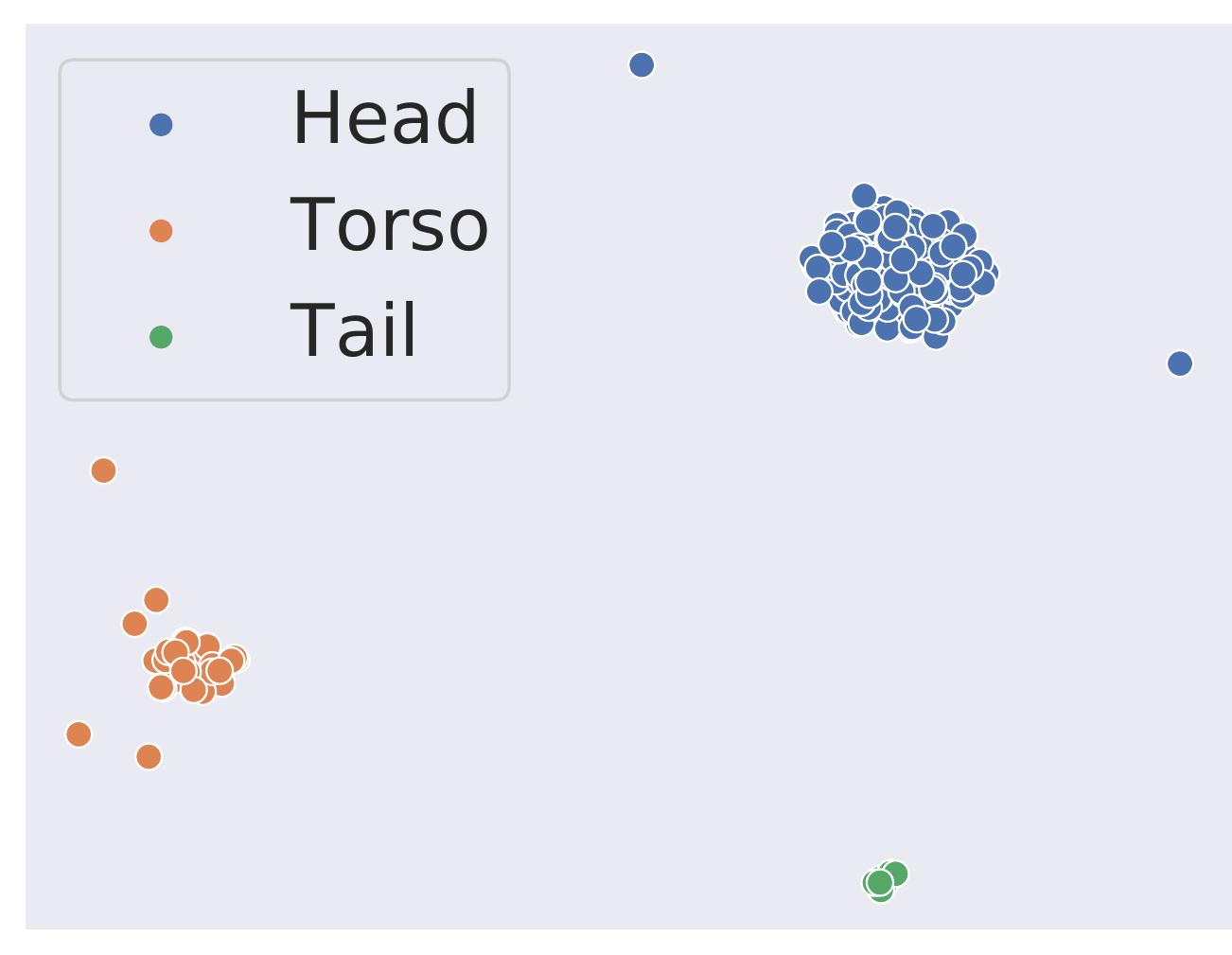}
    }
    \hspace{1cm}
    \subcaptionbox{DRO-LT.}{
    \includegraphics[width=0.25\textwidth,trim=0 0 140 0, clip]{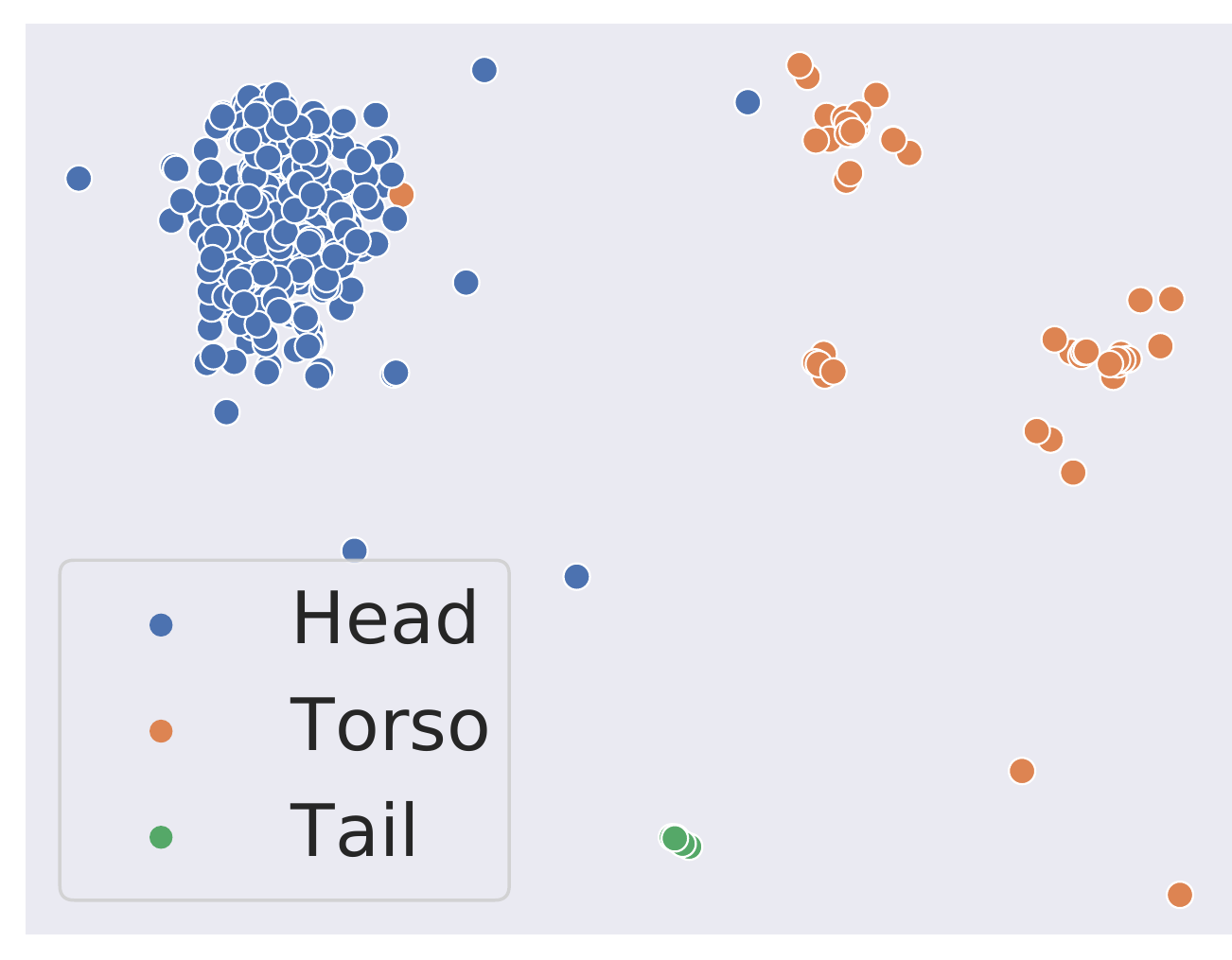}
    }
    \caption{Training set embedding visualisations on CIFAR100-LT via tSNE.
    Shown are a sample of classes that have $\geq 100$ associated samples (``Head''),
    $[20, 100)$ samples (``Torso''),
    and $< 20$ samples (``Tail'') in the training set.
    ELM produces more compact embeddings.
    }
    \label{fig:tsne}
    \vspace{-3mm}
\end{figure*}

\emph{Analysis of logit margins}.
We analyse the distribution of logit margins $\gamma( x, y ) \defEq f_y( x ) - \max_{y' \neq y} f_{y'}( x )$ on CIFAR100-LT.
We pick three labels from the Head, Torso, and Tail slices,
and compare the margin distributions for the cross-entropy loss (CE),
the logit adjustment (LogAdj) loss, the proposed ELM, and DRO-LT.
Figure~\ref{fig:margin} visualises these margin distributions.
As expected, logit adjustment tends to trade off performance on dominant classes, while significantly increasing margins on rare classes.

Interestingly,
from Figure~\ref{fig:margin},
ELM strongly controls the margin distribution, which becomes less variable.
Further, on the tail label, we see that ELM significantly shifts the mode of the margins over cross-entropy and LogAdj.
As such logit margins directly control generalisation performance~\citep{Bartlett:1998,Bartlett:2017},
this lends further credence to ELM improving classification performance as also suggested by Proposition~\ref{prop:gen_bound_pull}.

\emph{Analysis of embeddings}.
We conduct a similar analysis on the distances between the learned embeddings.
For instances $x$ with label $y$,
Figure~\ref{fig:intra} visualises the maximum intra-class distances $d_{\rm max}( x ) \defEq \max_{\xPos \in S_y} \| \Phi( x ) - \Phi( \xPos ) \|_2$.
The distances are normalised by the maximal embedding norm $\max_{x \in S} \| \Phi( x ) \|_2$.
We observe that DRO-LT increases the \emph{intra}-class distances slightly compared to CE, owing to contrasting samples within the same class.
By contrast, the pull part of ELM ensures that these distances remain small, thus encouraging tighter clusters, especially for tail classes.

To visually inspect the learned embeddings,
we create 2D tSNE~\citep{Maaten:2008} visualisations of the embeddings learned by LodAdj, ELM, and DRO-LT.
Figure~\ref{fig:tsne} illustrates these embeddings for a sample of
classes from the previously created Head, Torso and Tail buckets.
ELM is seen to produce more compact and separated embeddings compared to logit-adjusted cross-entropy minimisation and DRO-LT.
For additional experiments and ablations, we refer the reader to the Appendix.

\section{Discussion and Future Work}
The ELM method presents a unified approach to enforce margins in logit space, and regularise the distribution of embeddings.
Our argument for the value of such regularisation is instructive, and 
such regularisation implies better generalisation as shown in Proposition~\ref{prop:gen_bound_pull}.
Yet a key question remains elusive: 
can we improve the performance of the tail group without trading off the performance of the head?
The breakdown of per-group accuracies in Figure~\ref{fig:breakdown}, especially on the challenging iNaturalist problem, 
offers some hope of this possibility.
Exploring conditions under which this is possible  would be a worthwhile direction for future work.
More broadly, studying the efficacy of ELM in fairness settings with under-represented samples,
to ensure it does not introduce unforeseen biases,
is another important direction.

\bibliographystyle{plainnat}
\bibliography{logit_embed}

\clearpage

\appendix
\onecolumn

\begin{center}
  {\LARGE \ourtitle} \\[5mm]
  {\large Supplementary Material}
  \vspace{5mm}
\end{center}

\section{Experiment setup: hyperparameters}
\label{app:hyperparameters}

\subsection{Architecture and optimisation hyperparameters}

To facilitate a fair comparison, we use the same setup for all the methods for each dataset.
These settings are summarised in Table~\ref{tbl:hyper}.

For CIFAR,
we use the standard CIFAR data augmentation procedure used in previous works such as~\citet{Cao:2019, He:2016}, where 4 pixels are padded on each size and a random $32 \times 32$ crop is taken.
Images are horizontally flipped with a probability of 0.5.
For ImageNet and iNaturalist,
we apply the standard data augmentation comprising of random cropping and flipping as described in~\citet{Goyal:2017}.

\begin{table}[ht]
    \centering
    \renewcommand{\arraystretch}{1.25}
    \begin{tabular}{@{}lp{1.5in}ll@{}}
        \toprule
        & \textbf{CIFAR*-LT} & \textbf{ImageNet-LT} & \textbf{iNaturalist} \\
        \toprule
        Model & CIFAR ResNet-32 & ResNet-50 & ResNet-50 \\
        Optimiser          & \multicolumn{3}{c}{SGD with momentum} \\
        Base learning rate & \multicolumn{3}{c}{0.4} \\
        Epochs & 256 & 90 & 90 \\
        Batch size & 128 & 512 & 1024 \\
        Schedule & Linear warmup for the first 15 epochs, and a decay of 0.1 at the 96th, 192nd, and 224th epoch & Cosine & Cosine \\
        Weight decay & $10^{-4}$ & $5 \times 10^{-4}$ & $10^{-4}$ \\
        \bottomrule
    \end{tabular}
    \caption{Summary of hyperparameters.}
    \label{tbl:hyper}
\end{table}

\subsection{Settings for ELM}
We detail hyperparameter settings for the proposed ELM.
For all the four datasets considered in \S\ref{sec:experiments}, we set the pull margin to be  $\alpha_y \propto \Pr(y)^a$
for some $a > 0$.
This choice allows the ELM to  pull embeddings of tail
classes more strongly compared to frequent classes.
Empirically we found that setting $a= \frac{1}{2}$ or $a=1$ works well for most datasets.

\paragraph{CIFAR10-LT}:
We use $\alpha_y = \Pr(y)$, and set the
regularization parameter $\lambda = 0.01$.

\paragraph{CIFAR100-LT}:
We set $\alpha_y= \sqrt{n_y}$, and
regularization parameter $\lambda = 0.01$ (see \eqref{eqn:elm}),
where $n_y$ denotes the number of training samples in class $y$.

\paragraph{ImageNet-LT}:
We set $\alpha_y = 10 \times \sqrt{\Pr(y)}$ and $\lambda=0.001$.

\paragraph{iNaturalist}:
We set $\alpha_y = \sqrt{n_y}$, and set $\lambda = 0.01$.

Based on our investigation, the choice $\alpha_y = n_y$ with $\lambda = 0.01$ appears to offer good performance across many datasets.
See also \S\ref{sec:reg_sensitivity} where we show how various choices of $\alpha_y$ and $\lambda$ affect the test accuracy of ELM.

\clearpage

\section{Experiments: additional results}

\subsection{Additional margin distributions}

Figure~\ref{fig:margin-cdf} plots the cumulative margin distributions
($\gamma( x, y ) \defEq f_y( x ) - \max_{y' \neq y} f_{y'}( x )$ for instances $x$ with label $y$)
for various methods on CIFAR100-LT.
Here, we clearly see a significant gap between ERM and method that enforce a logit margin,
which are in turn bested with those that enforce an embedding margin.

\begin{figure}[ht]
    \centering

    \resizebox{\linewidth}{!}{
    \subcaptionbox{Label 10 (Head).}{
    \includegraphics[scale=0.28]{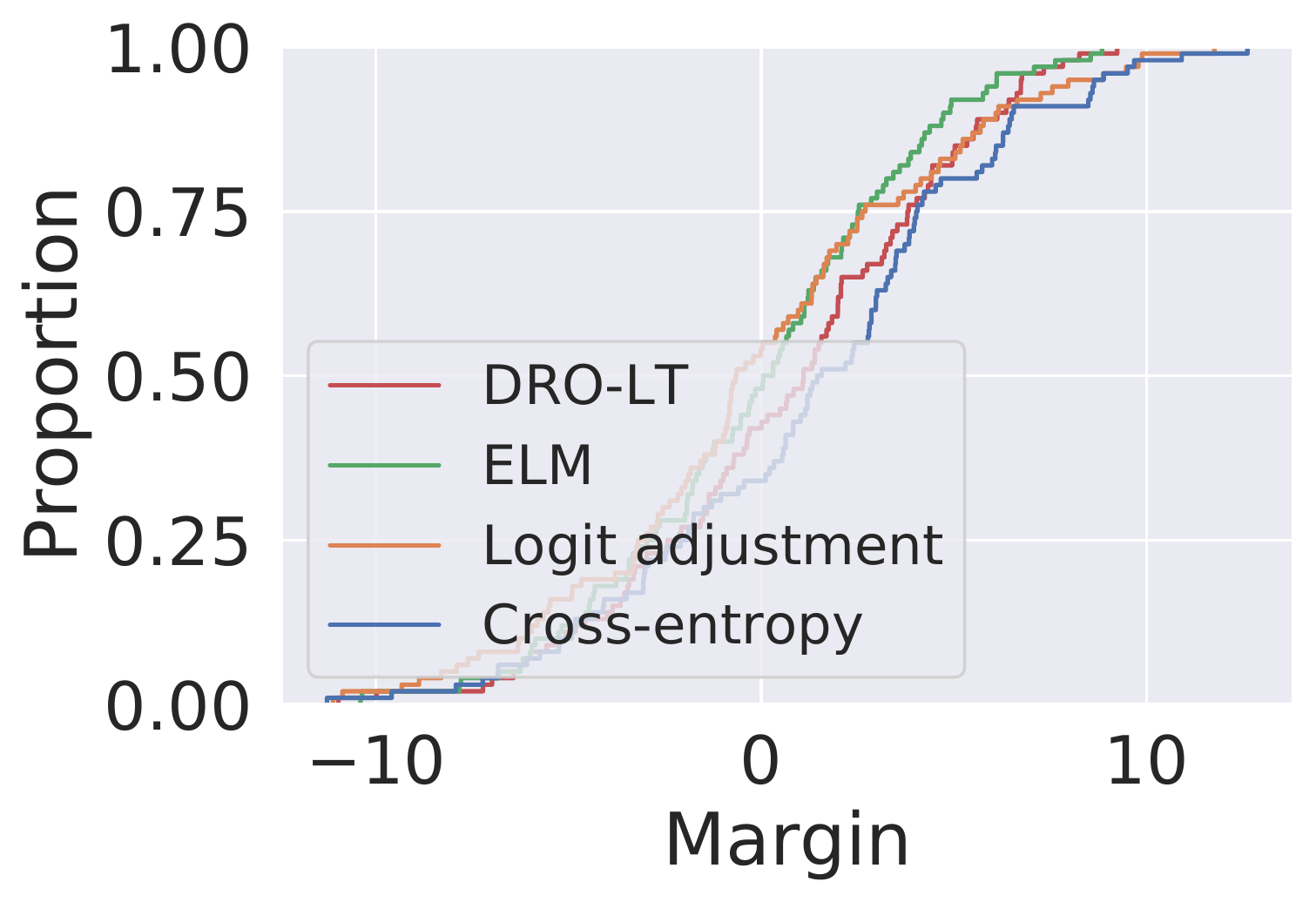}
    }
    \quad
    \subcaptionbox{Label 50 (Torso).}{
    \includegraphics[scale=0.28]{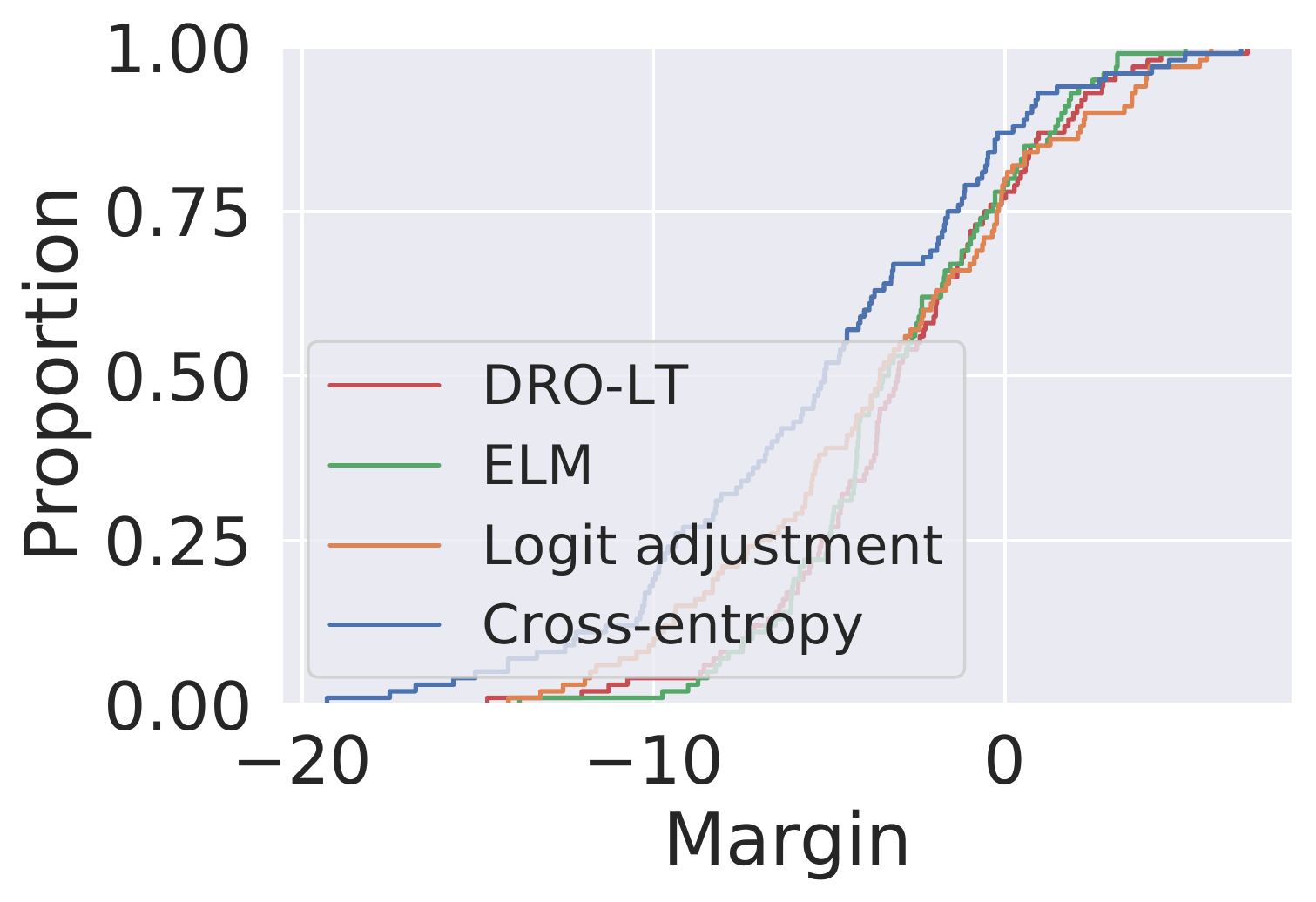}
    }
    \quad
    \subcaptionbox{Label 90 (Tail).}{
    \includegraphics[scale=0.28]{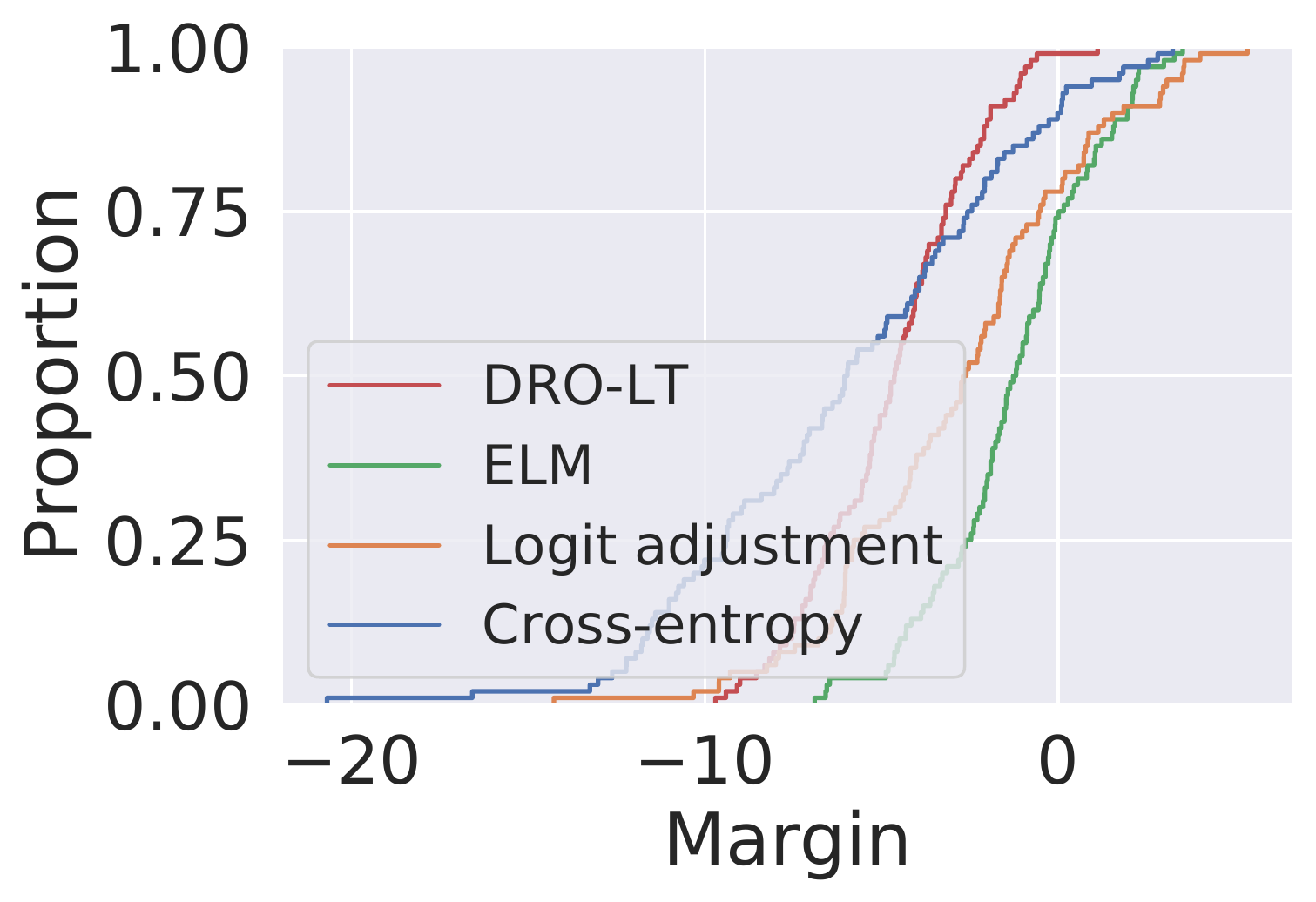}
    }
    }

    \caption{Margin CDF plots on CIFAR100-LT.
    For a given label $y$, we plot the cumulative distribution of the margins $\gamma( x, y ) \defEq f_y( x ) - \max_{y' \neq y} f_{y'}( x )$ for instances $x$ with label $y$.
    Logit adjustment is seen to shift margins favourably on the tail label,
    at some expense on head labels.
    ELM is seen to consistently reduce the variance of the margin distribution.
    }
    \label{fig:margin-cdf}
\end{figure}

\subsection{Sensitivity to Regularisation Strength}
\label{sec:reg_sensitivity}

Figure~\ref{fig:sensitivity} shows how the choice of regularisation strength $\lambda$ affects final test set performance of ELM.
When $\lambda$ is too large, performance suffers considerably;
when $\lambda$ is too small, performance is indistinguishable from that of standard cross-entropy minimisation.
However, for intermediate values of $\lambda$ we see some gains, indicating the value of the regulariser.

\begin{figure}[ht]
    \centering
    \resizebox{0.99\linewidth}{!}{
    \subcaptionbox{\LARGE CIFAR100-LT}{
    \includegraphics[scale=0.9]{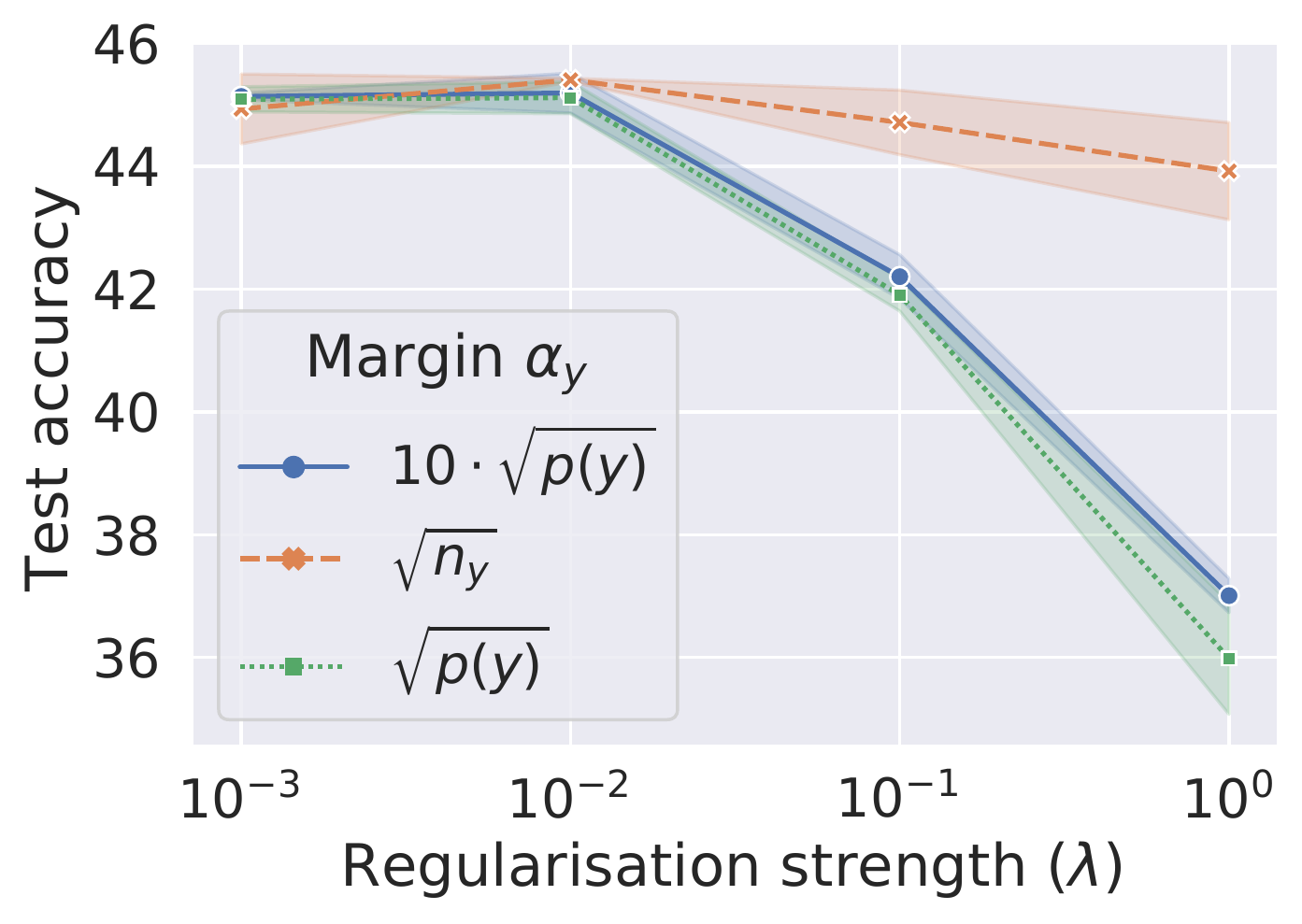}
    }
    \subcaptionbox{\LARGE ImageNet-LT}{
    \includegraphics[scale=0.9]{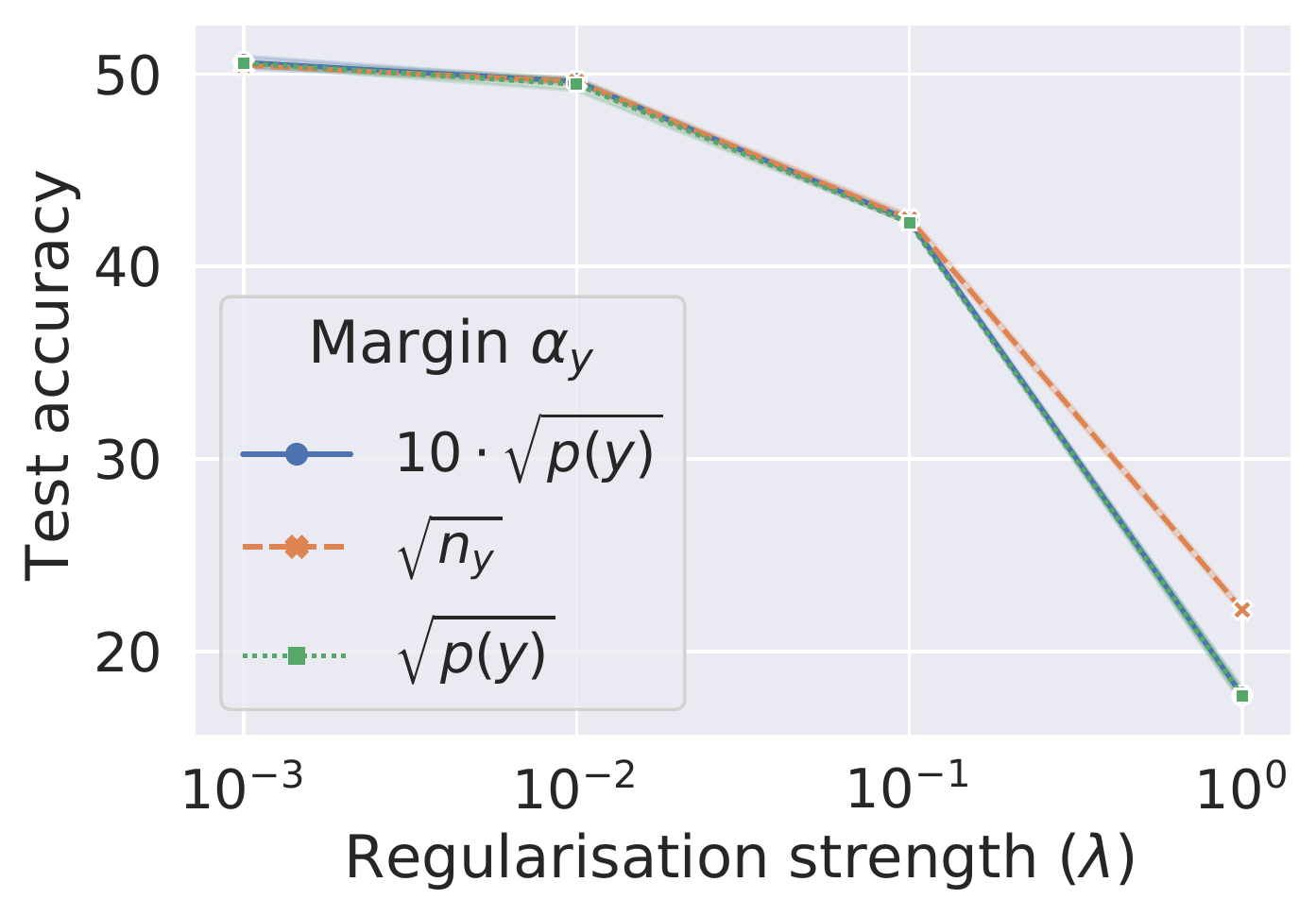}
    }
    \subcaptionbox{\LARGE iNaturalist}{
    \includegraphics[scale=0.9]{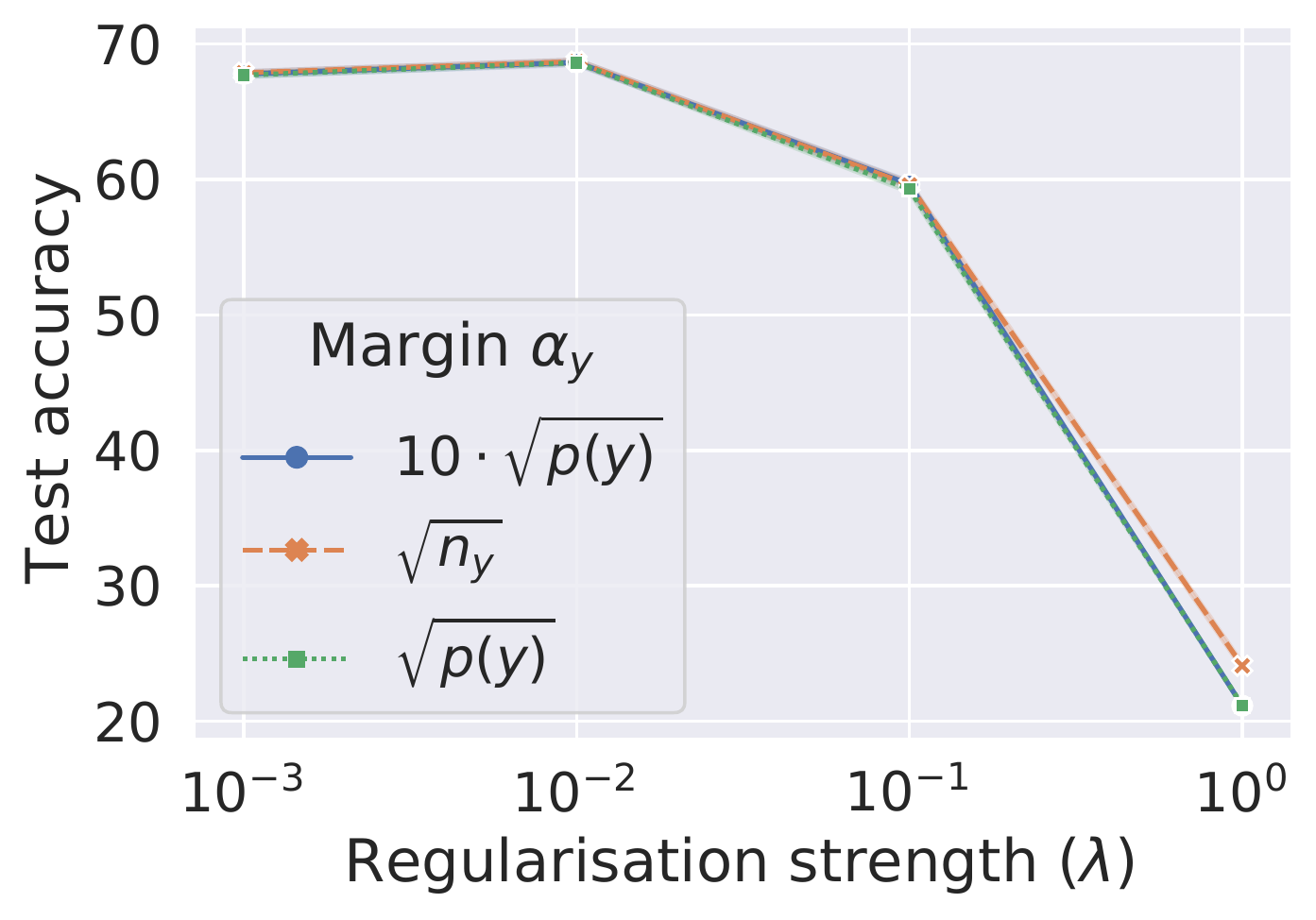}
    }
    }
    \caption{Impact of choice of $\lambda$ on final balanced accuracy.
    When $\lambda$ is too large, performance is seen to suffer;
    however, an intermediate choice yields gains over standard cross entropy
    minimisation.
    }
    \label{fig:sensitivity}
\end{figure}

\clearpage

\section{Proof}

\subsection{Proof of Proposition \ref{prop:pull_and_variance}}
\label{sec:proof_pull_and_variance}

We start with the definition of $\Omega_{\mathrm{pull}}(x,y)$:
\begin{align*}
\Omega_{\mathrm{pull}}(x,y) & =\log\text{\ensuremath{\left[1+\ensuremath{\sum_{x^{+}\in S_{y}\backslash\{x\}}}e^{\|\Phi(x)-\Phi(x^{+})\|^{2}-\alpha_{y}}\right]}}\\
 & =\log\text{\ensuremath{\left[\frac{1}{|S_{y}|-1}\sum_{x^{+}\in S_{y}\backslash\{x\}}1+\frac{1}{|S_{y}|-1}\ensuremath{\sum_{x^{+}\in S_{y}\backslash\{x\}}(|S_{y}|-1)}e^{\|\Phi(x)-\Phi(x^{+})\|^{2}-\alpha_{y}}\right]}}\\
 & \stackrel{(a)}{\ge}\frac{1}{|S_{y}|-1}\sum_{x^{+}\in S_{y}\backslash\{x\}}\log\text{\ensuremath{\left[1+(|S_{y}|-1)e^{\|\Phi(x)-\Phi(x^{+})\|^{2}-\alpha_{y}}\right]}}\\
 & \ge\frac{1}{|S_{y}|-1}\sum_{x^{+}\in S_{y}\backslash\{x\}}\log\text{\ensuremath{\left[(|S_{y}|-1)e^{\|\Phi(x)-\Phi(x^{+})\|^{2}-\alpha_{y}}\right]}}\\
 & =\frac{1}{|S_{y}|-1}\sum_{x^{+}\in S_{y}\backslash\{x\}}\|\Phi(x)-\Phi(x^{+})\|^{2}-\alpha_{y}+\log\left(|S_{y}|-1\right),
\end{align*}
where at $(a)$ we use Jensen's inequality i.e., $\log\left(\frac{1}{m}\sum_{i=1}^{m}f(x_{i})\right)\ge\frac{1}{m}\sum_{i=1}^{m}\log(f(x_{i}))$.
This implies that
\begin{align*}
\bar{\Omega}_{\mathrm{pull}}(y) & =\frac{1}{|S_{y}|}\sum_{x\in S_{y}}\Omega_{\mathrm{pull}}(x,y)\\
 & \ge\frac{1}{|S_{y}|(|S_{y}|-1)}\sum_{x\in S_{y}}\sum_{x^{+}\in S_{y}\backslash\{x\}}\|\Phi(x)-\Phi(x^{+})\|^{2}-\alpha_{y}+\log\left(|S_{y}|-1\right)\\
 & =\frac{1}{|S_{y}|(|S_{y}|-1)}\sum_{x,x^{+}\in S_{y}}\|\Phi(x)-\Phi(x^{+})\|^{2}-\alpha_{y}+\log\left(|S_{y}|-1\right),
\end{align*}
where the last line follows from the fact that $\|\Phi(x)-\Phi(x)\|^{2}=0$.

Observe that $\frac{1}{|S_{y}|^{2}}\sum_{x,x^{+}\in S_{y}}\|\Phi(x)-\Phi(x^{+})\|^{2}=\frac{2}{|S_{y}|}\sum_{x\in S_{y}}\|\Phi(x)-\hat{\mu}_{y}\|^{2}$
where $\hat{\mu}_{y}=\frac{1}{|S_{y}|}\sum_{x\in S_{y}}\Phi(x)$.
To show this, we will start by expanding the square in $\frac{2}{|S_{y}|}\sum_{x\in S_{y}}\|\Phi(x)-\hat{\mu}_{y}\|^{2}$.
In the following derivation, it is useful to note that $\frac{1}{|S_{y}|}\sum_{x\in S_{y}}1=1$
is used in many steps, and that $\frac{1}{|S_{y}|}\sum_{x\in S_{y}}1$
is introduced for the purpose of rearranging the expression into a
form that has two nested sums:

\begin{align*}
 & \frac{2}{|S_{y}|}\sum_{x\in S_{y}}\|\Phi(x)-\hat{\mu}_{y}\|^{2}\\
 & =\frac{2}{|S_{y}|}\sum_{x\in S_{y}}\left[\|\Phi(x)\|^{2}+\|\hat{\mu}_{y}\|^{2}-2\Phi(x)^{\top}\hat{\mu}_{y}\right]\\
 & =\frac{2}{|S_{y}|}\sum_{x\in S_{y}}\|\Phi(x)\|^{2}+2\thinspace\bigg\|\frac{1}{|S_{y}|}\sum_{x\in S_{y}}\Phi(x)\bigg\|^{2}-\frac{4}{|S_{y}|}\sum_{x\in S_{y}}\Phi(x)^{^{\top}}\left(\frac{1}{|S_{y}|}\sum_{x^{+}\in S_{y}}\Phi(x^{+})\right)\\
 & =\frac{2}{|S_{y}|^{2}}\sum_{x,x^{+}\in S_{y}}\|\Phi(x)\|^{2}+2\thinspace\left(\frac{1}{|S_{y}|}\sum_{x\in S_{y}}\Phi(x)\right)^{\top}\left(\frac{1}{|S_{y}|}\sum_{x'\in S_{y}}\Phi(x')\right)-\frac{4}{|S_{y}|}\sum_{x\in S_{y}}\Phi(x)^{^{\top}}\left(\frac{1}{|S_{y}|}\sum_{x^{+}\in S_{y}}\Phi(x^{+})\right)\\
 & =\frac{2}{|S_{y}|^{2}}\sum_{x,x^{+}\in S_{y}}\|\Phi(x)\|^{2}+\frac{2}{|S_{y}|^{2}}\sum_{x,x^{+}\in S_{y}}\Phi(x)^{\top}\Phi(x^{+})-\frac{4}{|S_{y}|^{2}}\sum_{x,x^{+}\in S_{y}}\Phi(x)^{^{\top}}\Phi(x^{+})\\
 & ={\frac{2}{|S_{y}|^{2}}\sum_{x,x^{+}\in S_{y}}\|\Phi(x)\|^{2}}-\frac{2}{|S_{y}|^{2}}\sum_{x,x^{+}\in S_{y}}\Phi(x)^{^{\top}}\Phi(x^{+})\\
 & ={\frac{1}{|S_{y}|^{2}}\sum_{x,x^{+}\in S_{y}}\|\Phi(x)\|^{2}+\frac{1}{|S_{y}|^{2}}\sum_{x,x^{+}\in S_{y}}\|\Phi(x^{+})\|^{2}}-\frac{2}{|S_{y}|^{2}}\sum_{x,x^{+}\in S_{y}}\Phi(x)^{^{\top}}\Phi(x^{+})\\
 & {\normalcolor =}\frac{1}{|S_{y}|^{2}}\sum_{x,x^{+}\in S_{y}}\left[\|\Phi(x)\|^{2}+\|\Phi(x^{+})\|^{2}-2\Phi(x)^{^{\top}}\Phi(x^{+})\right]\\
 & =\frac{1}{|S_{y}|^{2}}\sum_{x,x^{+}\in S_{y}}\|\Phi(x)-\Phi(x^{+})\|^{2}.
\end{align*}
It follows that
\begin{align*}
\bar{\Omega}_{\mathrm{pull}}(y) & \ge\frac{2}{(|S_{y}|-1)}\sum_{x\in S_{y}}\|\Phi(x)-\hat{\mu}_{y}\|^{2}-\alpha_{y}+\log\left(|S_{y}|-1\right)\\
 & =\frac{2|S_{y}|}{(|S_{y}|-1)}\sum_{j=1}^{K}\frac{1}{|S_{y}|}\sum_{x\in S_{y}}(\Phi(x)_{j}-\hat{\mu}_{y,j})^{2}-\alpha_{y}+\log\left(|S_{y}|-1\right)\\
 & =\frac{2|S_{y}|}{(|S_{y}|-1)}\sum_{j=1}^{K}\hat{\mathbb{V}}[\Phi_{j}(x)\mid y]-\alpha_{y}+\log\left(|S_{y}|-1\right).
\end{align*}

\subsection{Proof of Proposition \ref{prop:gen_bound_pull} (Generalisation Bound)}

Before we give proof for Proposition \ref{prop:gen_bound_pull}, we present a few
lemmas (Lemmas \ref{lem:bennett}, \ref{lem:var_log_lin}, and
\ref{lem:varlog_ub}) that will be useful later for proving the proposition.
We start with Lemma \ref{lem:bennett}, a known result that gives a probabilistic
upper bound of the population mean in terms of an empirical variance.

\begin{lem}[Bennett's inequality \citep{MauPon2009}]
\label{lem:bennett} Let $Z_{1},\ldots,Z_{n}$ be i.i.d. random variables
with values in $[0,B]$ and let $\delta>0$. Then, with probability
at least $1-\delta$ in $(Z_{1},\ldots,Z_{n})$,
\begin{align*}
\text{\ensuremath{\mathbb{E}Z}} & \le\frac{1}{n}\sum_{i=1}^{n}Z_{i}+\sqrt{\frac{2\hat{\mathbb{V}}[Z]\ln2/\delta}{n}}+\frac{7B\ln2/\delta}{3(n-1)}.
\end{align*}
\end{lem}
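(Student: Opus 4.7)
The statement as written is the empirical Bernstein bound of \citet{MauPon2009} (what they call ``Bennett's inequality'' in their writeup), since the right-hand side contains the \emph{empirical} variance $\hat{\mathbb{V}}[Z]$ rather than the true variance $\mathbb{V}[Z]$. My plan is therefore to derive it in two stages: first apply the classical (true-variance) Bernstein inequality, and then replace $\mathbb{V}[Z]$ by $\hat{\mathbb{V}}[Z]$ plus a residual using a concentration bound for the empirical standard deviation, combining the two via a union bound.

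The first stage is the usual Cram\'er--Chernoff argument for bounded random variables: using the centered log-moment-generating function bound $\log \mathbb{E}\,e^{\lambda(Z-\mathbb{E}Z)} \le \mathbb{V}[Z]\,\psi(\lambda B)/B^{2}$ with $\psi(u)=e^{u}-u-1$, optimizing over $\lambda>0$ after exponentiating and applying Markov, gives that with probability at least $1-\delta/2$,
\[
\mathbb{E}Z \le \frac{1}{n}\sum_{i=1}^{n}Z_{i} + \sqrt{\frac{2\mathbb{V}[Z]\ln(2/\delta)}{n}} + \frac{B\ln(2/\delta)}{3n}.
\]
For the second stage, I would view $\sqrt{\hat{\mathbb{V}}[Z]}$ as a function of $(Z_{1},\ldots,Z_{n})$ and verify it is Lipschitz in each coordinate with constant at most $\sqrt{2}\,B/\sqrt{n-1}$ (a short direct calculation using the identity $\hat{\mathbb{V}}[Z] = \frac{1}{n(n-1)}\sum_{i<j}(Z_{i}-Z_{j})^{2}$, together with boundedness in $[0,B]$). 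An Efron--Stein / self-bounding argument then yields the sub-Gaussian type tail that, with probability at least $1-\delta/2$,
\[
\sqrt{\mathbb{V}[Z]} \le \sqrt{\hat{\mathbb{V}}[Z]} + B\sqrt{\frac{2\ln(2/\delta)}{n-1}}.
\]

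Finally, I would union-bound the two events, substitute the upper bound for $\sqrt{\mathbb{V}[Z]}$ into the classical Bernstein inequality, and simplify. The cross term $\sqrt{\frac{2\ln(2/\delta)}{n}} \cdot B\sqrt{\frac{2\ln(2/\delta)}{n-1}}$ contributes $2B\ln(2/\delta)/\sqrt{n(n-1)} \le 2B\ln(2/\delta)/(n-1)$, which when added to $B\ln(2/\delta)/(3n) \le B\ln(2/\delta)/(3(n-1))$ gives exactly $7B\ln(2/\delta)/(3(n-1))$, matching the stated form.

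The main obstacle is verifying the sub-Gaussian concentration for $\sqrt{\hat{\mathbb{V}}[Z]}$ with the sharp Lipschitz constant leading to the $n-1$ (rather than $n$) in the denominator; this is the technical heart of the Maurer--Pontil argument, while the rest is routine bookkeeping of constants.
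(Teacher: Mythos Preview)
The paper does not supply its own proof of this lemma; it is simply quoted from \citet{MauPon2009} and used as a black box. Your two-stage outline---classical Bernstein with the true variance, then the self-bounding concentration of $\sqrt{\hat{\mathbb{V}}[Z]}$ to swap in the empirical variance, combined by a union bound---is precisely the Maurer--Pontil argument, and your arithmetic collecting the residual terms into $7B\ln(2/\delta)/(3(n-1))$ is correct.
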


We will use Lemma \ref{lem:bennett} as the starting point for proving our generalisation
bound of the logit-adjusted cross entropy loss.
The logit-adjusted cross entropy loss is a special case of the log loss.
Lemma \ref{lem:var_log_lin} states that the variance of the log loss is no
larger than the variance of the linear loss.
This observation, together with Lemma \ref{lem:varlog_ub}, will provide
necessary intermediate steps in our main proof for connecting the
variance of the log loss to the proposed pull objective.

\begin{lem}
\label{lem:var_log_lin}Let $\ell_{\mathrm{log}}(y,f(x)):=\log\left(1+e^{-yf(x)}\right)$
and $\ell_{\mathrm{lin}}(y,f(x))=-yf(x).$ Then, for any $y\in\{-1,1\}$,
$\mathbb{V}_{x|y}[\ell_{\mathrm{log}}(y,f(x))]\le\mathbb{V}_{x|y}[\ell_{\mathrm{lin}}(y,f(x))]$.
\end{lem}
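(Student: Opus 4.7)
The plan is to realize $\ell_{\mathrm{log}}$ as a $1$-Lipschitz transformation of $\ell_{\mathrm{lin}}$, and then invoke the well-known fact that a Lipschitz map cannot increase variance.

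First I would observe that $\ell_{\mathrm{log}}(y,f(x))=g\bigl(\ell_{\mathrm{lin}}(y,f(x))\bigr)$, where $g(u)\stackrel{.}{=}\log(1+e^{u})$ is the softplus. A direct derivative computation gives $g'(u)=e^{u}/(1+e^{u})=\sigma(u)\in(0,1)$, so $g$ is strictly $1$-Lipschitz on $\mathbb{R}$. This reduces the statement to: if $Z=g(W)$ with $g$ $1$-Lipschitz, then $\mathbb{V}[Z]\le\mathbb{V}[W]$.

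Second, to prove that variance reduction, I would use the standard symmetrization identity
\[
\mathbb{V}[Z]=\tfrac{1}{2}\,\mathbb{E}\bigl[(Z-Z')^{2}\bigr],
\]
where $Z'$ is an i.i.d.\ copy of $Z$; and analogously for $W,W'$. Since $(Z-Z')^{2}=(g(W)-g(W'))^{2}\le(W-W')^{2}$ pointwise by the Lipschitz bound, taking expectations yields $\mathbb{V}[Z]\le\mathbb{V}[W]$. Applied conditionally on $y$ with $W=\ell_{\mathrm{lin}}(y,f(x))$, this gives the claim.

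There is essentially no obstacle: the two ingredients (softplus is $1$-Lipschitz, and Lipschitz maps contract variance) are both elementary, and the result holds for any $y\in\mathbb{R}$, not just $y\in\{-1,1\}$. The only minor care is to note that the conditional variances are finite (which can be assumed implicitly, or handled by noting that $\ell_{\mathrm{log}}\le\max(0,-yf(x))+\log 2$, so finiteness of $\mathbb{V}_{x\mid y}[\ell_{\mathrm{lin}}]$ already implies finiteness of $\mathbb{V}_{x\mid y}[\ell_{\mathrm{log}}]$).
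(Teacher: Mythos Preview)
Your proof is correct and shares the paper's core idea: the logistic loss is a $1$-Lipschitz transformation, and $1$-Lipschitz maps contract variance. There are two small differences worth noting. First, you write $\ell_{\mathrm{log}}=g\circ\ell_{\mathrm{lin}}$ with $g(u)=\log(1+e^{u})$ the softplus, whereas the paper writes $\ell_{\mathrm{log}}(y,f(x))=s_{y}(f(x))$ with $s_{y}(z)=\log(1+e^{-yz})$ and shows $s_{y}$ is $1$-Lipschitz; your framing is what makes the result go through for arbitrary $y\in\mathbb{R}$, since $g$ is $1$-Lipschitz unconditionally while $s_{y}$ is only $|y|$-Lipschitz and the paper's final step also uses $y^{2}=1$. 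Second, for the variance-contraction step, the paper subtracts the constant $s_{y}(\mathbb{E}_{x'|y}[f(x')])$, bounds variance by the second moment, and applies the Lipschitz inequality against the single deterministic point $\mathbb{E}_{x'|y}[f(x')]$; you instead use the symmetrization identity $\mathbb{V}[Z]=\tfrac{1}{2}\mathbb{E}[(Z-Z')^{2}]$ with an i.i.d.\ copy and apply the Lipschitz bound pairwise. Both devices are standard one-liners; yours is a touch more symmetric, while the paper's avoids introducing an independent copy.
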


\begin{proof}
Let $s_{y}(z):=\log\left(1+e^{-yz}\right)$ for $y\in\{-1,1\}$. Observe
that $s'_{y}(z)=-\frac{ye^{-yz}}{1+e^{-yz}}$ so that $\sup_{z}|s'_{y}(z)|\le1$.
That is, $s_{y}$ is a $1$-Lipschitz function i.e., for any $z,z'\in\mathbb{R}$,
$|s_{y}(z)-s_{y}(z')|\le|z-z'|$. By definition of variance, for any
real-valued function $h$,
\begin{align*}
\mathbb{V}[h(z)] & =\mathbb{E}[h^{2}(z)]-\mathbb{E}^{2}[h(z)]\\
 & \le\mathbb{E}[h^{2}(z)].
\end{align*}
It follows that
\begin{align*}
\mathbb{V}_{x|y}[\ell_{\mathrm{log}}(y,f(x))] & =\mathbb{V}_{x|y}[s_{y}(f(x))]\\
 & \stackrel{(a)}{=}\mathbb{V}_{x|y}[s_{y}(f(x))-s_{y}(\mathbb{E}_{x'|y}[f(x')])]\\
 & \le\mathbb{E}_{x|y}\left[\left(s_{y}(f(x))-s_{y}(\mathbb{E}_{x'|y}[f(x')])\right)^{2}\right]\\
 & \stackrel{(b)}{\le}\mathbb{E}_{x|y}\left[\left(f(x)-\mathbb{E}_{x'|y}[f(x')]\right)^{2}\right]\\
 & =\mathbb{E}_{x|y}\left[\left(yf(x)-\mathbb{E}_{x'|y}[yf(x')]\right)^{2}\right]\\
 & =\mathbb{V}_{x|y}[\ell_{\mathrm{lin}}(y,f(x))],
\end{align*}
where at $(a)$ we note that adding a constant $s_{y}(\mathbb{E}_{x'|y}[f(x')])$
does not change the variance, and at $(b)$ we use the fact that $s_{y}$
is $1$-Lipschitz.
\end{proof}

\begin{lem}
\label{lem:varlog_ub}Consider the binary classification case where $y\in\{-1,1\}$.
Let $f(x)\stackrel{.}{=}w^{\top}\Phi(x)+b\in\mathbb{R}$ be the logit
function for class $y=1$. Define
$\Omega_{\mathrm{cen}}(y):=\mathbb{E}_{x|y}\|\Phi(x)-\mu_{y}\|^{2}$
where $\mu_{y}\stackrel{.}{=}\mathbb{E}_{x|y}\Phi(x)$. Let $\Omega_{\mathrm{cen}}\stackrel{.}{=}\frac{1}{2}\left[\Omega_{\mathrm{cen}}(-1)+\Omega_{\mathrm{cen}}(1)\right]$
be the center loss. Then, for
any $\Delta_{y}\in\mathbb{R}$, we have $\mathbb{V}_{x|y}[\ell_{\mathrm{log}}(y,f(x)+\Delta_{y})]\le\|w\|^{2}\mathrm{tr}(C_{y})$ where $C_{y}\stackrel{.}{=}\mathbb{E}_{x|y}(\Phi(x)-\mu_{y})(\Phi(x)-\mu_{y})^{\top}.$
\end{lem}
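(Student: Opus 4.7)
The plan is to chain three elementary steps: a Lipschitz reduction via Lemma~\ref{lem:var_log_lin}, a shift-invariance calculation for the variance, and a standard trace bound for a quadratic form in a positive semidefinite matrix.

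First, I would invoke Lemma~\ref{lem:var_log_lin} with the function $x\mapsto f(x)+\Delta_y$ playing the role of $f$. Inspection of the proof of that lemma shows that only the $1$-Lipschitzness of $s_y(z)=\log(1+e^{-yz})$ is used, so the conclusion applies to any real-valued argument, including the shifted logit. This yields
\[
\mathbb{V}_{x\mid y}[\ell_{\mathrm{log}}(y,f(x)+\Delta_y)]\le \mathbb{V}_{x\mid y}[\ell_{\mathrm{lin}}(y,f(x)+\Delta_y)] = \mathbb{V}_{x\mid y}[-y(f(x)+\Delta_y)].
\]

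Next, since we condition on $y$, the scalars $y$, $\Delta_y$, and the bias $b$ are constants, and the prefactor $-y\in\{-1,+1\}$ has absolute value $1$, so they drop out of the variance. Expanding $f(x)=w^{\top}\Phi(x)+b$ collapses the right-hand side to
\[
\mathbb{V}_{x\mid y}[w^{\top}\Phi(x)] = w^{\top} C_y\, w,
\]
where $C_y := \mathbb{E}_{x\mid y}[(\Phi(x)-\mu_y)(\Phi(x)-\mu_y)^{\top}]$ is the class-conditional covariance of $\Phi$ (so that $\mathrm{tr}(C_y)=\Omega_{\mathrm{cen}}(y)$).

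Finally, since $C_y$ is positive semidefinite, the Rayleigh quotient bound gives $w^{\top} C_y\, w\le \|w\|^{2}\lambda_{\max}(C_y)$, and as all eigenvalues of a PSD matrix are nonnegative, $\lambda_{\max}(C_y)\le \mathrm{tr}(C_y)$. Composing these three bounds yields the claim. I do not foresee a real obstacle: the only subtlety worth flagging is to notice that Lemma~\ref{lem:var_log_lin} holds for an \emph{arbitrary} real-valued argument, so the shift $\Delta_y$ is absorbed at no cost before the passage to the linear surrogate makes it disappear as an additive constant inside the variance.
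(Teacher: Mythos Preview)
Your proposal is correct and matches the paper's proof essentially step for step: apply Lemma~\ref{lem:var_log_lin} to the shifted logit, use shift-invariance so that $\mathbb{V}_{x\mid y}[-y(f(x)+\Delta_y)]=\mathbb{V}_{x\mid y}[f(x)]=w^{\top}C_y w$, then bound by $\|w\|^2\lambda_{\max}(C_y)\le \|w\|^2\mathrm{tr}(C_y)$. The only cosmetic difference is ordering---the paper first bounds the linear variance and then invokes Lemma~\ref{lem:var_log_lin} at the end.
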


\begin{proof}
Let $\lambda_{\max}(A)$ be the the maximum eigenvalue of a square
positive definite matrix $A$. Consider the variance of the linear
loss as in Lemma \ref{lem:var_log_lin}. Observe that
\begin{align}
\mathbb{V}_{x|y}[\ell_{\mathrm{lin}}(y,f(x)+\Delta_{y})] & =\mathbb{V}_{x|y}[-y(f(x)+\Delta_{y})]\nonumber \\
 & =\mathbb{V}_{x|y}[f(x)]\nonumber \\
 & =w^{\top}C_{y}w\nonumber \\
 & \le\|w\|^{2}\|C_{y}\|_{2}\nonumber \\
 & =\|w\|^{2}\lambda_{\mathrm{max}}(C_{y})\nonumber \\
 & \le\|w\|^{2}\mathrm{tr}(C_{y}).\label{eq:tr_cy_bound}
\end{align}
\end{proof}

\paragraph{Proof of Proposition \ref{prop:gen_bound_pull}}
We are now ready to prove Proposition \ref{prop:gen_bound_pull}.

\begin{proof}
We first consider the class-conditional logit adjusted loss $\mathbb{E}_{x|y}\ell_{\mathrm{log}}(y,f(x)+\Delta_{y})$.
For $y\in\{-1,1\}$, Lemma \ref{lem:bennett} implies that with probability
at least $1-\delta/2$
\begin{align*}
\mathbb{E}_{x|y}\ell_{\mathrm{log}}(y,f(x)+\Delta_{y}) & \le\frac{1}{|S_{y}|}\sum_{x\in S_{y}}\ell_{\mathrm{log}}(y,f(x)+\Delta_{y})+\sqrt{\frac{2\hat{\mathbb{V}}_{x|y}[\ell_{\mathrm{log}}(y,f(x)+\Delta_{y})]\ln4/\delta}{|S_{y}|}}+\frac{7B\ln4/\delta}{3(|S_{y}|-1)}.
\end{align*}
As a consequence of the union bound, we have with probability at least
$1-\delta$,
\begin{align*}
 & \sum_{y\in\{-1,1\}}\mathbb{P}(y)\mathbb{E}_{x|y}\ell_{\mathrm{log}}(y,f(x)+\Delta_{y})\\
 & \le\sum_{y\in\{-1,1\}}\mathbb{P}(y)\left[\frac{1}{|S_{y}|}\sum_{x\in S_{y}}\ell_{\mathrm{log}}(y,f(x)+\Delta_{y})+\sqrt{\frac{2\hat{\mathbb{V}}_{x|y}[\ell_{\mathrm{log}}(y,f(x)+\Delta_{y})]\ln2/\delta}{|S_{y}|}}+\frac{7B\ln2/\delta}{3(|S_{y}|-1)}\right]\\
 & =\sum_{y\in\{-1,1\}}\frac{\mathbb{P}(y)}{|S_{y}|}\sum_{x\in S_{y}}\ell_{\mathrm{log}}(y,f(x)+\Delta_{y})\\
 & \phantom{=}+\sum_{y\in\{-1,1\}}\mathbb{P}(y)\sqrt{\frac{2\hat{\mathbb{V}}_{x|y}[\ell_{\mathrm{log}}(y,f(x)+\Delta_{y})]\ln2/\delta}{|S_{y}|}}+\frac{7B\ln2/\delta}{3}\sum_{y\in\{-1,1\}}\frac{\mathbb{P}(y)}{|S_{y}|-1}\\
 & \stackrel{.}{=}T_{n}+V_{n}+\frac{7B\ln2/\delta}{3}\sum_{y\in\{-1,1\}}\frac{\mathbb{P}(y)}{|S_{y}|-1}\\
 & \stackrel{.}{=}(\heartsuit),
\end{align*}
where we define $T_{n}\stackrel{.}{=}\sum_{y\in\{-1,1\}}\frac{\mathbb{P}(y)}{|S_{y}|}\sum_{x\in S_{y}}\ell_{\mathrm{log}}(y,f(x)+\Delta_{y})$
and $V_{n}\stackrel{.}{=}\sum_{y\in\{-1,1\}}\mathbb{P}(y)\sqrt{\frac{2\hat{\mathbb{V}}_{x|y}[\ell_{\mathrm{log}}(y,f(x)+\Delta_{y})]\ln2/\delta}{|S_{y}|}}$.
We upper bound $V_{n}$ as
\begin{align*}
V_{n} & \stackrel{(a)}{\le}\sqrt{\sum_{y\in\{-1,1\}}\frac{2\mathbb{P}(y)}{|S_{y}|}\hat{\mathbb{V}}_{x|y}[\ell_{\mathrm{log}}(y,f(x)+\Delta_{y})]\ln2/\delta}\\
 & \stackrel{(b)}{\le}\sqrt{\ln\frac{2}{\delta}\sum_{y\in\{-1,1\}}\frac{2\mathbb{P}(y)}{|S_{y}|}\|w\|^{2}\mathrm{tr}(\hat{C}_{y})}\\
 & \stackrel{.}{=}(\bigstar)
\end{align*}
where at $(a)$ we use Jensen's inequality, $(b)$ follows from Lemma
\ref{lem:varlog_ub}, $\hat{C}_{y}\stackrel{.}{=}\frac{1}{|S_{y}|}\sum_{x\in S_{y}}(\Phi(x)-\hat{\mu}_{y})(\Phi(x)-\hat{\mu}_{y})^{\top},$
and $\hat{\mu}_{y}\stackrel{.}{=}\frac{1}{|S_{y}|}\sum_{x\in S_{y}}\Phi(x)$.
By Proposition \ref{prop:pull_and_variance}, we have
\begin{align*}
\frac{2|S_{y}|}{|S_{y}|-1}\cdot\mathrm{tr}(\hat{C}_{y})-\alpha_{y}+\log\left(|S_{y}|-1\right) & \le\bar{\Omega}_{\mathrm{pull}}(y),\text{ implying that}\\
2\mathrm{tr}(\hat{C}_{y}) & \le\bar{\Omega}_{\mathrm{pull}}(y)+\alpha_{y}-\log\left(|S_{y}|-1\right)\\
 & \le\bar{\Omega}_{\mathrm{pull}}(y)+\alpha_{y}.
\end{align*}
The last inequality suggests that
\begin{align*}
V_{n}\le(\bigstar) & \le\|w\|\sqrt{\ln\frac{2}{\delta}\sum_{y\in\{-1,1\}}\frac{\mathbb{P}(y)}{|S_{y}|}\left[\bar{\Omega}_{\mathrm{pull}}(y)+\alpha_{y}\right]}.
\end{align*}
Combining the last line with $(\heartsuit)$ gives the result.
\end{proof}

\section{Additional related work}
\label{app:related}

\textbf{Improved embedding geometry}.
Several works have studied means of improving the geometry of learned embeddings for clasification tasks.
The center loss~\citep{Wen:2016} pulls sample embeddings towards their class
centroid $\mu_y$:
$$ \Omega_{\rm center}( x, y ) \defEq \| \Phi( x ) - \mu_y \|_2^2. $$
Conversely, several works have studied regularisers that push apart embeddings~\citep{Zhang:2017b,Hayat:2019,Krichene:2018,Yang:2020};
e.g.,
under the assumption that embeddings are normalised,
the spreadout regulariser~\citep{Zhang:2017b} is
\begin{align}
M_1 = \frac{1}{|\{x \neq x'\}|}\sum_{x \neq x'}\Phi( x )^\top \Phi( x' ) \\
M_2 = \frac{1}{|\{x \neq x'\}|} \sum_{x' \neq x} ( \Phi( x )^\top \Phi( x' ) )^2 \\
\Omega_{\rm spread}( x, y ) \defEq M_1^2 + \max \left(0, M_2 - \frac{1}{d}\right).
\end{align}

Similar regularisers were also explored in a long-tailed setting by~\citet{Zhong:2019}.
None of the above techniques consider an explicit margin for their ``pull'' or ``push'' regularisation.

\textbf{Deep LDA}.
Fisher linear discriminant analysis~\citep{Fisher:1936} is a classical means of tackling classification,
which relies on finding projections that minimise intra-class variance (i.e., pull together projected scores) and maximise inter-class variance (i.e., push apart projected scores).
In a deep learning context,~\citet{DorKelWid2015} proposed a form of deep LDA.
This is not attuned to the long-tail setting, and does not enforce classification or embedding margins.

\clearpage

\end{document}